\tikzset{ state/.style={draw,ellipse,initial text=} }
\tikzset{
  loop above right/.style={above right, out= 60, in= 30, loop},
  loop above left/.style ={above left,  out=150, in=120, loop},
  loop below right/.style={below right, out=330, in=300, loop},
  loop below left/.style ={below left,  out=240, in=210, loop}
}
\newcommand{\ctl}	{{\sf CTL}}
\newcommand{\ltl}	{{\sf LTL}}
\newcommand{\opacpsl}	{{\sf oPSL}}
\newcommand{\psl}{\textsf{PSL}}
\newcommand{\patl}	{{\sf PATL}}
\newcommand{\ie}    	{i.e., }
\newcommand{\true}   	{\ensuremath{{\mathtt{true}}}}
\newcommand{\obs}   	{{\mathsf{obs}}}
\newcommand{\last}   	{\mathit{last}}
\newcommand{\dist}   	{{\sf{Dist}}}
\def\A{\mathbf{A}}
\def\F{\mathbf{F}}
\def\G{\mathbf{G}}
\def\P{\mathbf{P}}
\def\U{\mathbf{U}}
\def\X{\mathbf{X}}
\def \opac{\odot}
\renewcommand{\vec}[1]{\boldsymbol{\mathrm{#1}}}
\newcommand{\RUN}[2]{\mathsf{Paths}_{#2}(#1)}
\newcommand{\HIST}[2]{\mathsf{Hist}_{#2}(#1)}
\newcommand{\ACT}{\mathsf{Act}}
\newcommand{\AG}{\mathsf{Ag}}
\newcommand{\ASP}{\mathsf{Ap}}
\newcommand{\var}{\mathsf{Var}}
\newcommand{\mycopy}{\tt{\scriptsize copy}}
\newcommand{\wait}{\tt{\scriptsize wait}}
\newcommand{\send}{\tt{\scriptsize send}}
\newcommand{\FS}{\rightarrow}
\newcommand{\CAL}[1]{\mathcal{#1}}
\newcommand{\DDD}{\mathbf{D}}
\newcommand{\GGG}{\CAL{G}}
\newcommand{\MMM}{\CAL{M}}
\newcommand{\VVV}{\CAL{V}}
\newcommand{\TRANS}[1]{\stackrel{#1}{\longrightarrow}}
\newenvironment{GRAMMAR}{\[\begin{array}{lcl}}{\end{array}\]}
\newcommand{\VERTICAL}{\  \mid\hspace{-3.0pt}\mid \ }
\def\funchelper#1,#2,#3{#1\colon#2\to#3}
\newcommand{\func}[1]{\funchelper#1}
\newcommand{\nmodels}{\centernot{\models}}
\newcommand{\paths}{\mathsf{Paths}}
\renewcommand{\models}{\mathrel{\Vdash}}
\renewcommand{\nmodels}{\mathrel{\nVdash}}
\newcommand{\defeq}{\coloneq}
\newcommand{\form}{\mathfrak{a}}
\newcommand{\makeperson}[3]{\NewEnviron{#1}{\iftoggle{draft}{\begingroup\color{#3}\textbf{#2: }{\BODY}\endgroup}{}}}
\begin{document}

\title{Probabilistic Strategy Logic with \\ Degrees of Observability}
\titlerunning{Probabilistic Strategy Logic with Degrees of Observability}

 \author{Chunyan Mu\inst{1} \and Nima Motamed\inst{2} \and Natasha Alechina\inst{3,2} \and Brian Logan\inst{1,2}} 
 \authorrunning{C. Mu \and N. Motamed \and N. Alechina \and B. Logan} 
 \institute{$^1$ Department of Computing Science, University of Aberdeen, U.K.\\
 	$^2$ Information and computing sciences, Utrecht University, Netherlands\\
 	$^3$ Department of Computer Science, Open University Netherlands\\
 	{\small\tt chunyan.mu@abdn.ac.uk, n.motamed@uu.nl, brian.logan@abdn.ac.uk, n.a.alechina@uu.nl}
 }
 
\maketitle

\begin{abstract}
There has been considerable work on reasoning about the strategic ability of agents under imperfect information. However, existing logics such as Probabilistic Strategy Logic are unable to express properties relating to information transparency. \emph{Information transparency} concerns the extent to which agents' actions and behaviours are observable by other agents. Reasoning about information transparency is useful in many domains including security, privacy, and decision-making. In this paper, we present a formal framework for reasoning about information transparency properties in stochastic multi-agent systems.  We extend Probabilistic Strategy Logic with new observability operators that capture the degree of observability of temporal properties by agents. We show that the model checking problem for the resulting logic is decidable.
\end{abstract}

\section{Introduction}
\label{sec:intro}
Multi-Agent Systems (MASs) often involve agents that operate autonomously and interact with each other in dynamic and sometimes adversarial ways. Understanding the transparency and observability of these interactions is crucial for ensuring secure, efficient, and cooperative behaviour.
In particular, information transparency and agent observability directly impact the security and privacy of MASs in which agents share information and where there is a risk of unintentional data leakage. Analysing information transparency helps identify potential sources of data leakage and design mechanisms to prevent it. The ability to control what agents can observe and the information they can induce is crucial for safeguarding sensitive data and preventing information leakage.
In addition, the decision-making processes of agents are influenced by the information they possess about each other's behaviours and intentions. Quantified analysis of information transparency and agent observability plays a key role in determining the accuracy and effectiveness of decision-making within MASs.

This paper addresses the challenge of specifying, verifying and reasoning about information transparency properties within MASs. In particular, we specify observability properties from a standpoint of information transparency within the \textit{opacity} framework \cite{Bryans//:05a}. A property $\Phi$ is considered to be opaque, if for every behaviour $\pi$ satisfying $\Phi$ there is a behaviour $\pi'$ violating $\Phi$ such that $\pi$ and $\pi'$ are observationally equivalent, so that an observer can never be sure if $\Phi$ holds or not. Opacity (and its negation, observability) are so called \emph{hyperproperties} that relate multiple execution traces. Verification of hyperproperties is an emerging and challenging topic, see, for example, \cite{Beutner/Finkbeiner:23a}. 

In order to reason about the observability of behaviours, we introduce \emph{Opacity Probabilistic Strategy Logic} (\opacpsl), an extension of Probabilistic Strategy Logic (\psl) which allows quantitative analysis of information transparency. \opacpsl\ enables us to specify the degree of transparency in system behaviours to an observer under a binding of agents' strategies, taking into account predefined observability of behaviours for the observer. We use a very general
approach to observability which allows us to reason both about observability of state properties, and observability of actions. Using the concept of observability allows, for example, the identification of cases where agents observe sensitive information, disclosing potential information leakage or unauthorised access attempts. 

We introduce a novel framework for systematically analysing information transparency in stochastic multi-agent systems. The key contributions include:
\begin{itemize}
\item A definition of agent observability and information transparency concerning agent behaviours in the context of partially observable stochastic multi-agent systems with concurrent and probabilistic behaviours.

\item The introduction of Opacity Probabilistic Strategy Logic (\opacpsl) incorporating a new 
observability and degree of observability operators, allowing precise representation of observability properties that are challenging to express in standard logics for MAS.

\item Showing that the model-checking problem  for \opacpsl\ with memoryless strategies is decidable in 3EXPSPACE. 
\end{itemize}
The framework facilitates formal reasoning about agent observability and information transparency analysis in MAS, with applications in security, privacy, game theory, and AI.

The remainder of this paper is organised as follows. In Section \ref{sec:related} we discuss related work. In Section \ref{sec:model} we formalise stochastic multi-agent systems as partially observable stochastic models. Section \ref{sec:logic} extends Probabilistic Strategic Logic (PSL), incorporating new operators for quantified observability analysis. Section \ref{sec:veri} presented algorithmic approaches for systematic observability assessment within the PSL model checking framework. In Section \ref{sec:conclusion} we conclude and discuss possible directions for future work. 

\section{Related Work}
\label{sec:related}

There has been a significant amount of work on logical specification and formal verification of MAS with imperfect information.
Various methods have been developed to support the model checking of such logics, such as those focusing on epistemic logics for knowledge about the state of the system, e.g., \cite{LomuscioQR17,BelardinelliLMR17,KwiatkowskaN0S19,BelardinelliLMR20,BallotMLL24,MalvoneMS17}. 
Actions and knowledge in strategic settings has been investigated in \cite{Schobbens04,Goranko/Jamroga:04a,Agotnes:06a}. 
These approaches focus on formulas true in all  states or in all histories indistinguishable from the current one. In contrast, we are interested in properties that are either true on all observable traces, or false on all observable traces. In this we follow \cite{Bryans//:05a} which introduced the \emph{opacity} framework, which is based on being able to distinguish globally observable or unobservable properties. 

Probabilistic Alternating-Time Temporal Logic (\patl*) was proposed in \cite{Chen/Lu:07a}.
Huang et al. \cite{HuangSZ12} proposed an incomplete information version of \patl*, providing a framework for reasoning about possible states and actions based on agents' beliefs and strategies. However, it does not directly address information exposure resulting from the observation of agents' actions and behaviours or the security of information flows. Mu and Pang \cite{Mu23} developed a framework for specifying and verifying opacity in PATL. However, we propose an enhanced framework based on PSL, which offers greater expressiveness and flexibility.

Strategy logic with imperfect information~\cite{BerthonMMRV21} extends traditional modal logic to reason about strategic interactions in situations of imperfect information. It captures agents' decision-making abilities taking into account their limited knowledge. Additionally, work by \cite{FerrandoM23} addressed the verification of logics for strategic reasoning in contexts with imperfect information and perfect recall strategies, employing sub-model generation and Computation Tree Logic (\ctl*) model checking. A probabilistic extension of Strategy Logic (\psl) was introduced by \cite{AminofKMMR19} for stochastic systems; this work also investigated model-checking problems for agents with perfect and imperfect recall.

Our work is relevant to research on information flow security awareness analysis and verification, and also contributes to the quantified information flow security landscape. While previous studies focused on imperative modelling languages and probabilistic aspects \cite{AlvimCMMPS20} in secure computing systems, there has been little consideration of the dynamic collaboration, interaction, and decision-making patterns found in multi-agent scenarios. In contrast, this paper explores observability issues in MAS from a novel perspective of information transparency, and focuses on quantified information flow security awareness for MAS.

\section{Partially Observable MAS}
\label{sec:model}
In this section, we introduce \emph{partially observable multi-agent systems} which constitute the formal basis of our approach.

We write $\dist(X)$ for the set of all discrete probability distributions over a set $X$, 
\ie all functions $\mu: X \to \lbrack 0,1 \rbrack$ s.t. $\sum_{x \in X} \mu(x)=1$.
Given a set $X$, we denote by $2^X$ the power set of $X$, by $X^*$ the set of all finite words over $X$,
and by $X^\omega$ the set of all infinite words over $X$.

\begin{definition}
\label{def:pgs}
A \emph{stochastic transition system} (POMAS) is a tuple $\GGG=(\AG, S, \ACT, T)$, where
\begin{itemize}
    \item $\AG=\{1, \dots, n\}$ is a finite set of \emph{agents};
    \item $S$ is a finite set of \emph{states};
    \item $\ACT$ is a finite set of \emph{actions};
    \item $T: S \times \ACT^{\AG} \FS \dist(S)$ is a \emph{transition function}. 
\end{itemize}
\end{definition}
\noindent
At each time step, the agents simultaneously choose actions from $\ACT$, producing an \emph{action profile} $\alpha = (a^1, \dots, a^n) \in \ACT^\AG$. The transition function $T$ specifies for each state $s$ and such an action profile $\alpha$, a probability distribution $T(s, \alpha) \in \dist(S)$ over \emph{next} states: $T(s, \alpha)(s')$ is the probability of moving to $s'$ from $s$ when the agents choose $\alpha$.
We write $s \TRANS{\alpha} s'$ whenever $T(s, \alpha)(s') > 0$.

\begin{definition}
Given a stochastic transition system $\GGG$, a \emph{$\GGG$-\emph{path}} is an infinite sequence $\pi = (s_0, \alpha_0) (s_1 \alpha_1) \dots$ of pairs of states $s_i$ and action profiles $\alpha_i$, such that $s_i \TRANS{\alpha_i} s_{i+1}$ for all $i$. We will usually write $\pi = s_0 \alpha_0 s_1 \alpha_1 \cdots$ for simplicity of notation. The state $s_0$ is referred to as the \emph{initial state} of $\pi$.
We denote by $\pi_\mathsf{s}(j)$ the $j$th state $s_j$ of $\pi$,
and by $\pi_{\mathsf{a}}(j)$ the $j$th action profile $\alpha_j$ of $\pi$.
Furthermore, we denote by $\pi_{\le j}$ the prefix of $\pi$ up to the $j$th state,
i.e. $\pi_{\le j} = s_0 \alpha_0 s_1 \dots s_j$.
Given a state $s$, we write $\RUN{s}{\GGG}$ for the set of all $\GGG$-paths $\pi$ with initial state $s$.
\end{definition}

\noindent
We also define \emph{histories} as \emph{finite} sequences $h = s_0 \alpha_0 \cdots s_n$, such that $s_i \TRANS{\alpha_i} s_{i+1}$ for all $i < n$.
We denote the set of all histories $h$ with initial state $s$ by $\HIST{s}{\GGG}$. Given a history $h = s_0 \alpha_0 \cdots s_n$, we write $\last(h)$ for its final state $\last(h) = s_n$.

\begin{example}
\label{eg:sts}
Consider the simple message interception scenario illustrated in Figure ~\ref{fig:eg1}.
\begin{figure}
\begin{center}
\includegraphics[width=0.6\textwidth]{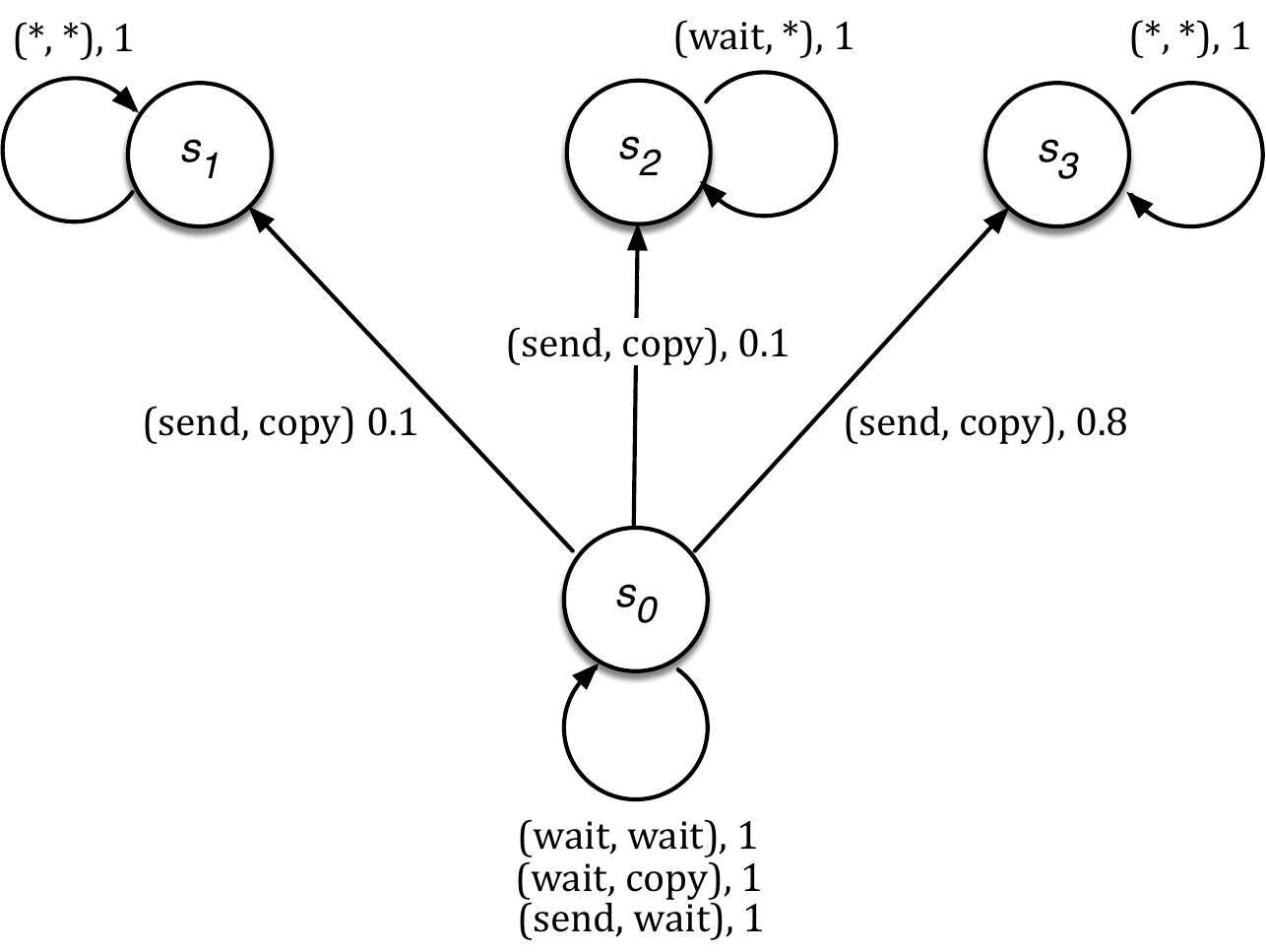}
\caption{Simple message interception scenario}
\end{center}
 \label{fig:eg1}
\end{figure}

We have:
$\AG = \{ 1,2 \}$,
$\ACT_1 = \{ \send, \wait \}$,
$\ACT_2 = \{ \mycopy, \wait \}$,
$S = \{ s_0,s_1, s_2,s_3 \}$.

Example paths from $s_0$ include:
\begin{align*}
\pi &= s_0 \TRANS{(\send,\mycopy)} s_1 \TRANS{(\send,\mycopy)}  s_1 \TRANS{(\send,\mycopy)} s_1 \ldots \\
\pi' &= s_0 \TRANS{(\send,\mycopy)} s_3 \TRANS{(\send,\mycopy)}  s_3 \TRANS{(\send,\mycopy)} s_3 \ldots
\end{align*}

Example histories include the following prefixes of $\pi$ and $\pi$':
\begin{align*}
h &= s_0 \TRANS{(\send,\mycopy)} s_1 \TRANS{(\send,\mycopy)} s_1 \\
h' &= s_0 \TRANS{(\send,\mycopy)} s_3 \TRANS{(\send,\mycopy)} s_3 \\
\end{align*}
\end{example}

\paragraph{Observation of behaviours}
To model the observational capabilities of an agent, we use a set of \textit{observables}, distinct from the states and actions of the transition system. 
We consider two types of observables: \emph{state} and \emph{action} observables. We denote by $\Theta_\mathsf{s}$ the finite set of state observables, and by $\Theta_\mathsf{a}$ the finite set of action observables. The sets $\Theta_\mathsf{s}$ and $\Theta_\mathsf{a}$ contain distinguished `invisible' state/action observables $\circ$ and $\epsilon$, respectively. Observables are then defined as pairs of action and state observables, i.e., $\Theta \defeq \Theta_\mathsf{a} \times \Theta_\mathsf{s}$. An \emph{observation function} is a pair $(\obs^\mathsf{a}, \obs^\mathsf{s})$ of functions $\func{\obs^\mathsf{a}, \ACT^\AG, \Theta_\mathsf{a}}$ and $\func{\obs^\mathsf{s}, S, \Theta_\mathsf{s}}$. For notational convenience, we combine these into a single function $\func{\obs, \ACT^\AG \times S, \Theta}$ (i.e., with $\obs(\mathbf{\alpha}, s) = (\obs^\mathsf{a}(\alpha), \obs^\mathsf{s}(s))$. 

We will often lift observation functions to functions $\func{\obs, \mathsf{Paths}_\GGG(s), \Theta^\omega}$ operating on paths by letting $\obs(s_0 \alpha_0 s_1 \cdots) = \epsilon\ \obs^\mathsf{s}(s_0) \obs^\mathsf{a}(\alpha_0) \obs^\mathsf{s}(s_1) \cdots$, where $\epsilon$ is the distinguished action observable, and lift $\obs$  to sets of paths $X \subseteq \mathsf{Paths}_\GGG(s)$ by letting $\obs(X) = \{\obs(\pi) \mid \pi \in X\}$.
Finally, given a set $X \subseteq \mathsf{Paths}_\GGG(s)$, we write $\obs^\sim (X)$ for the set of all $\pi \in \mathsf{Paths}_\GGG(s)$ for which there exists $\pi' \in X$ with $\obs(\pi) = \obs(\pi')$. In other words, $\obs^\sim(X)$ is the set of all paths that are observationally equivalent to some path in $X$.

\begin{definition}
A \emph{partially observable multi-agent system} is a tuple 
$\MMM=(\GGG, \ASP, L, \{\obs_i\}_{i \in \AG})$, where:
\begin{itemize}

\item $\GGG=(\AG, S, \ACT, T)$ is a stochastic transition system;

\item $\ASP$ is a finite set of \emph{atomic propositions}; 

\item $L: S \to 2^{\ASP}$ is a \emph{state labelling function};

\item $\func{\obs_i, \ACT^\AG \times S, \Theta}$ is the observation function of $i$.
\end{itemize}
\end{definition}

\begin{example}
\label{eg:model}
The following partially observable multi-agent system models the running example introduced in Example~\ref{eg:sts}.

Let the labelling $L$ be as follows: $L(s_0) = \{\mathit{init}\}$, $L(s_1) = \{\mathit{stolen}\}$,
$L(s_2) = \{\mathit{stolen, warning}\}$ and $L(s_3) = \emptyset$.

Let the observation function of agent 1 be as follows:
\begin{align*}
\obs^s_1 (s_0) &= \mathit{init} \\
\obs^s_1 (s_1) &= \obs^s_1 (s_3) = \circ \\
\obs^s_1 (s_2) &= \mathit{warning} \\
\obs^a_{1}(\send, \wait) &= \obs^a_{1}(\send, \mycopy) = (\send, \epsilon)\\
\obs^a_{1}(\wait, \wait) &= \obs^a_{1}(\wait, \mycopy) = (\wait, \epsilon)
\end{align*}
\noindent
Observations of agent 2 are perfect, that is $\obs^s_2(s_i)=s_i$ and $\obs^a_2(\alpha_i) = \alpha_i$ for every state $s_i$ and joint action $\alpha_i$.

We then have:
$$\obs_{1} (h) = \obs_{1} (h') =  \mathit{init} \TRANS{(\send,\epsilon)} \circ \TRANS{(\send,\epsilon)} \circ$$
This implies that agent $1$ is not able to distinguish $h$ and $h'$ under $\obs_1$. The same holds for the paths $\pi$ and $\pi'$: they are observationally equivalent for agent $1$, $\pi' \in  \obs^\sim(\{\pi\})$.
\end{example}

Agent interactions are governed by strategies informed by their observations. 
These strategies are often referred to as \emph{uniform strategies}: they do not rely directly on the partially observable multi-agent system, but instead on the observables the model produces.
\begin{definition}
A \emph{strategy} is a function $\func{\sigma, \Theta^\ast, \dist(\ACT)}$.
We write $\Sigma$ for the set of all possible strategies. 
A \emph{memoryless strategy} is a function $\sigma: \Theta_\mathsf{s} \to \dist(\ACT)$, \ie the decision made by the strategy is solely dependent on the observation of the current state, and not on the entire history of observations.
A \emph{strategy profile} (resp. \emph{memoryless strategy profile}) is a tuple $\vec{\sigma}=(\sigma_1, \sigma_2, \dots, \sigma_n) \in \Sigma^\AG$ of strategies (resp. memoryless strategies) $\sigma_i$ for each agent $i$.
\end{definition}

Note that memoryless strategy profiles allow us to transform a partially observable multi-agent system and strategy profile into Markov chain, which we consider to consist of a set of states $X$ together with a transition function $\func{t, X, \dist(X)}$. Given a partially observable multiagent system $\mathcal{M}$ (with set of states $S$ and transition function $T$), and a memoryless strategy profile $\vec{\sigma}$, we define the Markov chain $M_{\vec{\sigma}} = (S, t_{\vec{\sigma}})$ by letting $$t_{\vec{\sigma}}(s_1)(s_2) = \sum_{\alpha \in \ACT^\AG} T(s_1, \alpha)(s_2) \prod_{i \in \AG} \sigma_i(\obs^\mathsf{s}_i(s_1))(\alpha_i).$$

\section{The Logic \opacpsl}
\label{sec:logic}
Strategy Logic~\cite{ChatterjeeHP10} and Probabilistic Strategic Logic (\psl)~\cite{AminofKMMR19} are formal languages designed for reasoning about multi-agent systems, where autonomous agents make strategic decisions in an environment. These logics focus on capturing the strategic interactions among agents in a multi-agent system, where agents are viewed as rational entities capable of reasoning about their actions and the actions of others. \psl\ extends Strategy Logic to incorporate uncertainty and probabilistic reasoning, and was designed for scenarios where agents operate in an environment with inherent randomness.
\psl\ formulas are interpreted on multi-agent stochastic transition systems where transitions result from the concurrent actions of agents. An agent's strategy determines the probability that the agent will select a given action in any given situation, typically based on the system's historical evolution. 

We extend \psl~\cite{AminofKMMR19} to allow reasoning about observability operators and strategic abilities of agents. This syntax allows us to specify a wide range of properties and conditions in \opacpsl, including those related to observability from the perspectives of different agents. It also supports probability-related operations for modelling probabilistic behaviours in MAS. The key addition is the \emph{observability formulae} $\opac_{i} \Phi$ and the \emph{degree of observability} terms $\DDD_{\beta, i}(\Phi)$.

Before we introduce the syntax, we first define \emph{bindings} and \emph{valuations}, which are used in the syntax to determine which strategies are used by which agents.

\begin{definition}
Let $\var$ be a set of \emph{variables}. A \emph{binding} is a function $\beta: \AG \to \var$ assigning variables to agents.  
A \emph{valuation} is a \emph{partial} function $\nu: \var \rightharpoonup {\Sigma}$ assigning strategies to some variables. Given a valuation $\nu$, a strategy $\sigma$, and a variable $x$, we denote by $\nu\lbrack x \mapsto \sigma \rbrack$ the valuation defined as
$\nu\lbrack x \mapsto \sigma \rbrack(x) \defeq \sigma$ and $\nu\lbrack x \mapsto \sigma \rbrack(y) \defeq \nu(y)$ for all $y \neq x$.
\end{definition}

Intuitively, bindings tell agents which strategies, represented by variables, they should follow, while valuations determine which strategies variables refer to. Note that if the range of $\beta$ is contained in the domain of $\nu$ then composing them produces a strategy profile $\nu \circ \beta \in \Sigma^\AG$.

\begin{definition}
\label{def:syntax}                                                       
The \emph{syntax} of \opacpsl\ includes three classes of formulae: 
\emph{history formulae}, 
\emph{path formulae}, and
\emph{arithmetic terms}
ranged over by $\varphi$, $\Phi$, and $\tau$ respectively.
\begin{GRAMMAR}
  \varphi
     &::=&
  p 
     \VERTICAL
  \neg \varphi
     \VERTICAL
  \varphi \lor \varphi
     \VERTICAL
  \exists x. \varphi
     \VERTICAL
  \tau \bowtie \tau
     \VERTICAL
  \opac_{i} \Phi
  	\\
  \Phi
     &::=&
  \varphi
  	\VERTICAL
  \neg \Phi
    \VERTICAL
  \Phi \lor \Phi    
    \VERTICAL  
  \X \Phi 
     \VERTICAL
  \Phi  \U \Phi 
     \\
  \tau &::=&
  	c 
  	\VERTICAL
        \tau^{-1}
        \VERTICAL
  	\tau \oplus \tau
  	\VERTICAL
  	\P_{\beta}(\Phi)
  	\VERTICAL
  	\DDD_{\beta, i} (\Phi)
\end{GRAMMAR}
\noindent
where $p \in \ASP$ is an \emph{atomic proposition}, 
$x \in \var$ is a variable,
${\bowtie} \in \{>,<,=\}$,
$i \in \AG$,
$c \in \mathbb{Q}$,
$\oplus \in \{ +,\times\}$,
and $\beta: \AG \to \var$ is a binding.
\end{definition}
As for \psl, in \opacpsl\ the variables $x \in \mathsf{Var}$ represent variations in agent strategies and are quantified over in state formulas like $\exists x.\varphi$. The path formulas contain standard \ltl\ temporal operators: $\X \Phi$ means `$\Phi$ holds in the next step', and $\Phi \U \Psi$ means `$\Phi$ holds until $\Psi$ holds'. The arithmetic term $\P_{\beta} (\Phi)$ indicates the probability that, given agent $i$ employs strategy $\beta (i)$ for each $i \in \AG$, a random path in the system satisfies $\Phi$. 

The key addition to the logic is the observability operator and the degree of observability. The observability operator $\opac_{i} \Phi$ states that the behaviours satisfying $\Phi$ are observable to agent $i \in \AG$. Specifically, it requires that for each path $\pi$ satisfying $\Phi$, there is no observationally equivalent path $\pi'$ violating $\Phi$ from agent $i$'s perspective. This operator allows us to assess the observational capabilities of agent $i$ regarding system behaviours expressed by property $\Phi$. The quantitative observability term $\DDD_{\beta, i} (\Phi) $ expresses the degree of observability of agent $i$ with respect to $\Phi$-paths under strategy bindings $\beta$, which is defined as the probability of having a $\Phi$-path that is not observationally equivalent to a $\lnot \Phi$-path, conditional on having a $\Phi$-path. 

Before we provide the semantics, we briefly comment on how probabilities of sets of paths are calculated: the following is largely standard in the literature on probabilistic infinite-trace temporal logics, but we restate it hear to clarify the notation. 
Given a history $h$ with initial state $s$ in $\GGG$, the \emph{cone} of $h$, denoted by $\langle h \rangle$, is the set of all paths $\pi$ with initial state $s$ that extend $h$, i.e., such that $\pi = h\alpha\pi'$ for some action profile $\alpha$ and path $\pi'$. Taking a strategy profile $\vec{\sigma}$ and history $h$, we consider a probability measure $\mu_{h, \vec{\sigma}}$ on the $\sigma$-algebra generated by all cones of histories $h'$ with the initial state $\last(h)$. This probability measure is the unique measure such that for all histories $h' = s_0 \alpha_0 \cdots s_n$, we have
\begin{align*}
\mu&_{h, \vec{\sigma}}(\langle h' \rangle)
&= \prod_{k < n} T(s_k, \alpha_k)(s_{k+1}) \cdot \prod_{i \in \AG}\sigma_i(\obs_i(h_{-1} h'_{\le k}))(\alpha_k^i),
\end{align*}
where $h_{-1}$ is $h$ with the final state $\last(h)$ cut off (since otherwise the state $\last(h)$ would appear twice successively in $hh'_{\le k}$).
Intuitively, this is the probability of encountering the history $h'$ after $h$ when everyone acts according to $\vec{\sigma}$. As with other similar logics, it is a standard exercise to verify that the sets of paths encountered in the semantics are all elements of the $\sigma$-algebra we consider.

\begin{definition}
\label{def:semantics}
Given a partially observable multiagent system $\MMM=(\GGG, \ASP, L, \{\obs_i\}_{i \in \AG})$, 
a valuation $\nu: \var \rightharpoonup \Sigma$, and
a binding $\beta: \AG \to \var$, the
semantics for \opacpsl\ is defined by simultaneous induction over history formulae, arithmetic terms and path formulas as follows.

For a history $h$, we define:

\begin{itemize}

  \item $h,\nu\models p$ iff $p \in L(\last(h))$.

  \item $h,\nu \models \neg\varphi$ iff $h,\nu \nmodels \varphi$.

  \item $h,\nu \models \varphi \lor \psi$ iff 
  	$h,\nu \models \varphi$ or  $h,\nu \models  \psi$.
  	
  \item $h,\nu \models \exists x.\varphi$ iff 
  	there exists $\sigma \in \Sigma$ such that $h,\nu \lbrack x \mapsto \sigma \rbrack \models \varphi$.

 \item $h,\nu \models \tau \bowtie \tau'$ iff 
	$\VVV_{h,\nu}(\tau) \bowtie \VVV_{h,\nu}(\tau') $
	where: 
        \begin{itemize}
        \item $\VVV_{h,\nu}(c) = c$,
        \item $\VVV_{h, \nu}(\tau^{-1}) = (\VVV_{h, \nu}(\tau))^{-1}$,
 	\item $\VVV_{h,\nu}(\tau \oplus \tau') = \VVV_{h,\nu}(\tau) \oplus \VVV_{h,\nu}(\tau')$,
 	\item $\VVV_{h,\nu}(\P_{\beta}(\Phi))= \mu_{h, \nu \circ \beta} (\{\pi \mid \pi,\nu, 0 \models \Phi \})$,
        \item $\VVV_{h,\nu}(\DDD_{\beta, i}(\Phi))
        = \frac{\mu_{h, \nu \circ \beta} (\{\pi \mid \pi,\nu, 0\ \models\ \Phi \}\ \setminus\ \obs_i^\sim (\{\pi' \mid \pi',\nu, 0\ \nmodels\ \Phi\}) )}{\mu_{h, \nu \circ \beta}(\{\pi \mid \pi, \nu, 0\ \models\ \Phi\})}$.
        \end{itemize}

\item $h,\nu \models_{\MMM} \opac_{i} \Phi$ iff 
 	for all $\pi, \pi' \in \mathsf{Paths}_\GGG(\last(h))$ it holds that if $\pi, \nu, 0 \models \Phi$ and $\pi', \nu, 0 \nmodels \Phi$, then $\obs_i(\pi) \ne \obs_i(\pi')$.
    
\end{itemize}
\noindent
For a path $\pi$ of $\GGG$ and $k \geqslant 0$, we define:

\begin{itemize}

\item $\pi, \nu, k \models \varphi$ iff $\pi_{\le k},\nu \models \varphi$.

\item $\pi, \nu, k \models \neg \Phi$ iff $\pi,\nu, k \nmodels \Phi$.

\item $\pi, \nu, k \models \Phi \lor \Psi$ iff $\pi,\nu, k \models \Phi$ or $\pi,\nu, k \models \Psi$.

\item $\pi, \nu, k \models \X \Phi$ iff $\pi,\nu, k+1 \models \Phi$.
  
\item $\pi,\nu, k \models \Phi \U \Psi$ iff there exists $\ell \ge k$ such that $\pi,\nu,\ell \models \Psi$ and for all $m \in \lbrack k, \ell)$ we have $\pi,\nu,m \models \Phi$.
 
\end{itemize}

\end{definition}
\noindent
As in \psl, we use the convention that for all $h$, $\nu$, and formulas $\tau \bowtie \tau'$ containing either a subterm $\rho^{-1}$ such that $\rho$ is evaluated to 0, or a subterm $\mathbf{D}_{\beta, i}(\Phi)$ for which the value in the denominator is evaluated to 0, we put $h, \nu \nmodels \tau \bowtie \tau'$ by default, to avoid issues with division by zero.

A variable $x$ is \emph{free} in an \opacpsl\ formula if it appears in the domain of a binding $\beta$ appearing within a subformula $\P_{\beta}(\Phi)$ or term $\DDD_{\beta, i}(\Phi)$ that is not within the scope of a quantifier $\exists x$. A history formula is a \emph{sentence} if it contains no free variables.

As usual, the until operator allows to derive the temporal modality $\F$ (``eventually'') and $\G$ (``always''): $\F\Phi \ {\buildrel\rm def\over=} \ \true~ \U~ \Phi$ and $\G\Phi \ {\buildrel\rm def\over=} \ \neg\F \neg\Phi$. 

In the context of the \opacpsl\ semantics, if we restrict our attention to \textit{memoryless} strategies, we can consider the semantics to be interpreted over states $s$ instead of histories $h$. 

\begin{example}
\label{eg:logic}
Continuing with the partially observable multi-agent system from Example \ref{eg:model}.

Let us consider the property $\F \mathit{stolen}$. It is not observable by agent $1$ given a history/state $s_0$. This is because as we saw before, there are two paths $\pi$ and $\pi'$ from $s_0$, one of which (going through $s_1$) satisfies $\F \mathit{stolen}$, and the other one (going through $s_3$) does not. So: 
$$h,\nu \not \models_{\MMM} \opac_{1} (\F \mathit{stolen})$$ 
where $h = s_0$ and $\nu$ is any assignment.
Clearly, 
$$h,\nu \models_{\MMM} \opac_{2} (\F \mathit{stolen})$$ 
because agent $2$ can observe all states and can always distinguish $\F \mathit{stolen}$-paths from $\neg \F \mathit{stolen}$-paths.

Let us consider the degree of observability of $\F \mathit{stolen}$ given the same history and a
binding $\beta$ that assigns strategies $(\sigma_1,\sigma_2)$ to the agents. $\sigma_1$ is as follows:
it requires $1$ to perform $\send$ with probability $1$ in $\mathit{init}$, $\wait$ with probability $1$ in $\mathit{warning}$, and perform $\send$ and $\wait$ with probability $0.5$ in $\circ$. 

$\sigma_2$ requires agent $2$ to perform $\mycopy$ in $s_0$, and perform $\mycopy$ and $\wait$ with probability $0.5$ in $s_1, s_2$ and $s_3$.

Recall that $\VVV_{h,\nu}(\DDD_{\beta, 1}(\F\mathit{stolen}))$ is

        $$\frac{\mu_{h, \nu \circ \beta}([\F \mathit{stolen}]\ \setminus\ \obs_1^\sim ([\neg \F \mathit{stolen}]) )}{\mu_{h, \nu \circ \beta}([\F \mathit{stolen}])}$$
where we denote by $[\F \mathit{stolen}]$ the set of paths $\{\pi \mid \pi, \nu, 0\ \models\ \F \mathit{stolen}\}$, similarly for $[\neg \F \mathit{stolen}]$.

The set of paths $[\F \mathit{stolen}]$ contains paths going through $s_1$
and $s_2$, that is, paths with prefixes $s_0 \TRANS{(\send,\mycopy)} s_1$ and
$s_0 \TRANS{(\send,\mycopy)} s_2$. $\mu_{h, \nu \circ \beta}$ assigns this set $0.2$.

The set of paths $[\neg \F \mathit{stolen}]$ contains paths going through $s_3$.
For agent $1$, who cannot distinguish $s_1$ and $s_3$, the set of paths observationally equivalent to $[\neg \F \mathit{stolen}]$, $\obs_1^\sim ([\neg \F \mathit{stolen}])$, contains all paths with prefixes $s_0 \TRANS{(\send,\mycopy)} s_1$ and $s_0 \TRANS{(\send,\mycopy)} s_3$.

The set of paths $[\F \mathit{stolen}]\ \setminus\ \obs_1^\sim ([\neg \F \mathit{stolen}])$ therefore contains paths with prefixes $s_0 \TRANS{(\send,\mycopy)} s_2$. $\mu_{h, \nu \circ \beta}$ assigns this set $0.1$.

Hence $\VVV_{h,\nu}(\DDD_{\beta, 1}(\F \mathit{stolen})) = \frac{0.1}{0.2} = 0.5$.
\end{example}

\begin{example} 
\label{eg:logic1}
Lastly, we consider a few more general examples. Let $\vec x= (x_1, \dots, x_n)$ be a tuple of $n$ strategy variables.
\begin{itemize}
\item [1)] Asking whether, against all possible strategies of the other agents $2, \dots, n$, agent $1$ has a strategy $x_1$ such that the path formula $\Psi$ is observable to her can be expressed as:
\[\exists x_1. \forall x_2. \forall x_3 \dots \forall x_n. \left(  \opac_{i} (\Psi) \right)\]

\item [2)] Asking whether any other agent $i \ge 2$ has a strategy $x_i$, such that for all possible strategies of agent $1$, the observability degree of the path formula $\Psi$ from $1$'s view is $\le 0.1$, provided that $\psi$ holds with probability $\ge 0.9$, can be expressed as:
\[\forall x_1. \exists x_2 \dots \exists x_n. \left(\P_{\vec{x}}(\Psi) \ge 0.9 \to \DDD_{\vec{x}, 1} (\Psi) \le 0.1 \right)\]
\end{itemize}
\end{example}

\section{Verification of \opacpsl}
\label{sec:veri}
Since \opacpsl\ extends \psl, its model checking problem is undecidable, even when restricted to partially observable multi-agent systems with only a single agent \cite{AminofKMMR19,BrazdilBFK06}.

However, if we restrict our attention to memoryless strategies, the model checking problem becomes decidable.

\begin{theorem}
Model checking \opacpsl\ sentences when the semantics is interpreted only with memoryless strategies, is decidable.
\end{theorem}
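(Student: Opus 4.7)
The plan is to reduce model checking to deciding a sentence in the first-order theory of real closed fields (RCF), which is decidable by Tarski's theorem. A memoryless strategy $\sigma : \Theta_\mathsf{s} \to \dist(\ACT)$ is parameterised by finitely many reals $p_{\theta, a} \in [0,1]$ satisfying $\sum_{a \in \ACT} p_{\theta, a} = 1$ for each $\theta \in \Theta_\mathsf{s}$, so each strategy quantifier $\exists x$ translates to a real quantifier over such a parameter vector, guarded by these stochastic constraints. Because the semantics is memoryless, the truth of every history subformula and the value of every arithmetic subterm depends only on the last state, so the entire translation can be parameterised by states.

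The translation proceeds by simultaneous induction on the structure of the sentence, producing for each state $s$ an RCF formula (resp.\ RCF term) equivalent to the $\opacpsl$ subformula (resp.\ subterm) evaluated at $s$, in the free strategy parameters. Boolean connectives, strategy quantifiers, atomic propositions, and the arithmetic operations $+, \times, {}^{-1}$ translate directly, with the division-by-zero convention enforced by explicit guards. For a term $\P_\beta(\Phi)$, let $\varphi_1, \ldots, \varphi_\ell$ be the maximal history subformulas of $\Phi$; the inductive hypothesis provides RCF formulas $\widetilde{\varphi}_{j, s'}$ for each $j$ and state $s'$. A finite case split over Boolean assignments $A$ to the truth values of every $(\varphi_j, s')$ reduces $\Phi$ to a pure \ltl\ formula $\Phi_A$ on atoms labelled by $A$. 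Compiling $\Phi_A$ into a deterministic Rabin automaton, taking the product with the parametric Markov chain $M_{\vec\sigma}$ (whose transition probabilities are polynomials in the strategy parameters), and solving the standard linear system for accepting-BSCC reachability expresses the probability as a rational function of the parameters, hence RCF-definable.

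The observability constructs require additional $\omega$-automata machinery. Under the same case split on $A$, the set of $\neg\Phi$-paths from $s$ is $\omega$-regular. Since $\obs_i$ acts letter-by-letter on paths, its image $\obs_i(\neg\Phi\text{-paths}) \subseteq \Theta^\omega$ is recognised by a nondeterministic B\"uchi automaton; Safra--Piterman determinisation yields a deterministic Rabin automaton on $\Theta^\omega$, and feeding each letter of a path through $\obs_i$ before this automaton gives a DRA recognising $\obs_i^\sim(\neg\Phi\text{-paths})$. The sentence $\opac_i \Phi$ at $s$ then reduces to disjointness of two $\omega$-regular sets---a decidable combinatorial check contributing a constant Boolean under each $A$. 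For $\DDD_{\beta, i}(\Phi)$, the numerator $\mu_{s, \vec\sigma}(\Phi\text{-paths} \setminus \obs_i^\sim(\neg\Phi\text{-paths}))$ is likewise the probability of an $\omega$-regular event in the parametric Markov chain, hence a rational function of the strategy parameters, while the denominator has already been handled by the $\P_\beta$ case.

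The main obstacle is this observability-closure operation: while $\P_\beta$ reduces to standard parametric probabilistic \ltl\ model checking, $\opac_i$ and $\DDD_{\beta, i}$ demand a \emph{deterministic} $\omega$-automaton representation of $\obs_i^\sim(\cdot)$ applied to a parametric $\omega$-regular event, forcing $\omega$-automata determinisation into the pipeline together with a further case split on the support pattern of the strategy parameters (i.e.\ which $p_{\theta, a}$ are zero) in order to fix the graph structure of the product Markov chain before the linear-system step. Once all pieces are assembled, the original $\opacpsl$ sentence translates to an equivalent RCF sentence, and decidability of RCF completes the proof; a careful complexity accounting via Basu--Pollack--Roy quantifier elimination yields the 3EXPSPACE upper bound claimed in the introduction.
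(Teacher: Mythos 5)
Your proposal is correct and follows essentially the same route as the paper: a state-indexed inductive translation into the first-order theory of the reals, with strategy quantifiers becoming real quantifiers over distribution-constrained parameters, $\P_\beta$ handled by a DRA/product-Markov-chain/linear-system encoding, and the observability constructs handled by building an automaton for $\obs_i^\sim(\cdot)$ via an NBA that guesses the observationally equivalent path followed by Safra determinization and complementation. The only differences are implementation-level: where you externalize the dependence on the valuation and on the zero-pattern of the parameters via finite case splits, the paper encodes both inside the real-arithmetic formula (truth of maximal state subformulas embedded in the automaton-transition formulas, and reachability expressed with universally quantified marking variables), and for $\opac_i$ the paper uses a direct product-graph cycle check rather than determinization plus disjointness.
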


We describe the model checking procedure exhibiting decidability in the next section.

\subsection{Model Checking Algorithm}

In this section, we outline the model checking algorithm for \opacpsl\ when considering memoryless strategies. The complete procedure can be found in the Appendix. The basis of the procedure lies in that for \psl\ \cite{AminofKMMR19}, but the novel operators of \opacpsl\ present highly non-trivial challenges, as we will see. 

Given a partially observable multi-agent system $\MMM$, we will define first-order logic formulas $\form_{\varphi, s}$ by induction over state formulas $\varphi$ 
and states $s$, written in the signature of real arithmetic. The construction will be such that
for all \opacpsl\ \emph{sentences} $\varphi$ and states $s$, we have that
$\form_{\varphi, s}$ holds in the theory of real arithmetic if and only if $s \models \varphi$. Using the decidability of this theory, we then get a model checking algorithm.

Throughout we will often write $\top_{\mathit{cond}}$ for some metalogical condition $\mathit{cond}$, defined as $\top_\mathit{cond} \defeq \top$ if $\mathit{cond}$ is true, and $\top_\mathit{cond} \defeq \bot$ otherwise. We will denote the equality symbol inside of the real arithmetic formula by $\approx$ to avoid any confusion.

\paragraph{Atoms and Booleans}
Let $\form_{p, s} \defeq \top_{p \in L(s)}$. We put $\form_{\varphi \land \psi, s} \defeq \form_{\varphi, s} \land \form_{\psi, s}$ and $\form_{\lnot \varphi, s} \defeq \lnot \form_{\varphi, s}$.

\paragraph{Existential quantifier}
Given the formula $\exists x.\varphi$, we introduce variables $r_{x, \theta, a}$, intuitively encoding the probability that the strategy $x$ performs action $a$ upon observing $\theta \in \Theta^\mathsf{s}$. Let $\form_{\exists x.\varphi, s} \defeq \exists (r_{x, \theta, a})_{\theta \in \Theta^\mathsf{s}, a \in \ACT} [\dist_x \land \form_{\varphi, s}],$
where
$$
\dist_x \defeq [\bigwedge_{\theta \in \Theta^\mathsf{s}, a \in \ACT} r_{x, \theta, a} \geqslant 0] \land [\bigwedge_{\theta \in \Theta^\mathsf{s}}\sum_{a \in \ACT} r_{x, \theta, a} \approx 1].
$$
The formula $\dist_x$ encodes that the variables $r_{x,\theta,a}$ give probability distributions for each $\theta$.

\paragraph{Full observability formulas}
Given the formula $\odot_k \Phi$, we write a formula that expresses that it is \textbf{not} the case that $s \nmodels \odot_k \Phi$. The reason for this double negation is that we can relatively easily express $s \nmodels \odot_k \Phi$: this holds iff there exist paths $\pi, \pi' \in \paths_\MMM(s)$ such that $\pi \models \Phi$ and $\pi' \models \lnot \Phi$, but $\obs_k(\pi) = \obs_k(\pi')$. To check this, we will describe a rooted directed graph $\GGG_{\odot_k \Phi}$ inside of the real arithmetic formula, such that infinite walks through the graph (starting from the root) correspond precisely to pairs $\pi, \pi' \in \paths_\MMM(s)$. Furthermore, we will identify a set $F_{\odot_k\Phi}$ of vertices in the graph, such that infinite walks $(\pi, \pi')$ through the graph reach $F_{\odot_k \Phi}$ \emph{infinitely often} if and only if $\pi \models \Phi$, $\pi' \models \lnot \Phi$, and $\obs_k(\pi) = \obs_k(\pi')$. Given this, all we then need to express that $s \nmodels \odot_k \Phi$ is that there exists an infinite walk through the graph that hits $F_{\odot_k \Phi}$ infinitely often.

Note that $\Phi$ can be considered to be an LTL formula over atomic propositions $W = 2^{\mathsf{Max}(\Phi)}$, where $\mathsf{Max}(\Phi)$ is the set of maximal state subformulas of $\Phi$. So we can construct nondeterministic Büchi automata $A_\Phi = (Q_\Phi, D_\Phi, q_\Phi^\ast, F_\Phi)$ and $A_{\lnot\Phi} = (Q_{\lnot\Phi}, D_{\lnot\Phi}, q_{\lnot\Phi}^\ast, F_{\lnot\Phi})$ over alphabet $W$, recognizing those $\mathbf{w} \in W^\omega$ such that $\mathbf{w} \models_\mathsf{LTL} \Phi$ and $\mathbf{w} \models_\mathsf{LTL} \lnot \Phi$, respectively.
We finally define $\form_{\odot_k\Phi, s}$ by stating that we cannot reach vertices in $F_{\odot_k \Phi}$ from the root that go back to themselves with non-empty paths, which is an efficient way of stating no infinite walk through the graph hits $F_{\odot_k \Phi}$ infinitely often.

The detailed definitions of $\form_{\odot_k\Phi, s}$ are omitted here due to lack of space and can be found in the Appendix.

\paragraph{Inequalities}
Given the formula $\tau_1 \leqslant \tau_2$, we introduce variables $r_{\tau}$ for all arithmetic subterms $\tau$ appearing in $\tau_1$ and $\tau_2$ (we denote the set of all such subterms by $\mathit{Sub}(\tau_1, \tau_2)$), which will intuitively hold the values the terms $\tau$ will take at the state $s$ under consideration, and given the strategies assigned to the strategic variables $x$. We define $\form_{\tau_1 \leqslant \tau_2, s}$ as
$$
\exists (r_{\tau})_{\tau \in \mathit{Sub}(\tau_1, \tau_2)}[\mathit{Eqn}_{\tau_1, s} \land \mathit{Eqn}_{\tau_2, s} \land r_{\tau_1} \leqslant r_{\tau_2}],
$$
where the formula $\mathit{Eqn}_{\tau_i, s}$ encodes that the variable $r_{\tau_i}$ indeed holds the value of $\tau_i$ in $s$ under the current valuation. We will now define these formulas.

\paragraph{Arithmetic terms}
We let:
$$\mathit{Eqn}_{c, s} \defeq [r_c \approx c],$$ 
$$\mathit{Eqn}_{\tau^{-1}, s} \defeq \mathit{Eqn}_{\tau, s} \land [r_{\tau} \times r_{\tau^{-1}} \approx 1],$$ 
$$\mathit{Eqn}_{\tau_1 + \tau_2, s} \defeq \mathit{Eqn}_{\tau_1, s} \land \mathit{Eqn}_{\tau_2, s} \land [r_{\tau_1 + \tau_2} \approx r_{\tau_1} + r_{\tau_2}],$$
$$\mathit{Eqn}_{\tau_1 \times \tau_2, s} \defeq \mathit{Eqn}_{\tau_1, s} \land \mathit{Eqn}_{\tau_2, s} \land [r_{\tau_1 \times \tau_2} \approx r_{\tau_1} \times r_{\tau_2}],$$

\paragraph{Probabilistic terms}
Given the formula $\P_\beta (\Phi)$, we `internalize' the description of the model checking procedure of PCTL* inside of our real arithmetic formula.

Considering $\Phi$ to be an LTL formula over $W = 2^{\mathsf{Max}(\Phi)}$, as in the construction for the full observability formula, we construct a deterministic Rabin automaton $A_\Phi = (Q, q^\ast, \delta, \mathsf{Acc})$ over alphabet $W$ accepting those $\mathbf{w} \in W^\omega$ such that $\mathbf{w} \models_\mathsf{LTL} \Phi$.
Recall that a (memoryless) strategy profile $\bm{\sigma}$ (as e.g. given by a binding $\beta$ and valuation $\nu$) turns our model $\MMM$ into a Markov chain $M_{\bm{\sigma}} = (S, t_{\bm{\sigma}})$, with $\func{t_{\bm{\sigma}}, S, \dist(S)}$ defined by putting $$t_{\bm{\sigma}}(s_1)(s_2) = \sum_{\alpha \in \ACT^\AG} T(s_1, \alpha)(s_2) \prod_{i \in \AG} \sigma_i(\obs_i(s_1))(\alpha_i).$$

Inside of the real arithmetic formula, we wish to compute the probability that a random walk through this Markov chain starting at $s$ satisfies $\Phi$. To do so, we will internalize the product Markov chain construction as is used in PCTL* model checking. The product Markov chain $M_{\bm{\sigma}} \otimes A$ has the same dynamics as $M_{\bm{\sigma}}$, while also providing as input to the automaton $A$ the formulas in $W$ which are true at each state.
It follows from this construction and the defining property of the DRA $A$, that the probability we are after is precisely the probability that a random walk through $M_{\bm{\sigma}} \otimes A$ from $(s, q^\ast)$ is `accepted' by $A$, in the sense that the walk's sequence of automaton states $q_0 q_1 \cdots \in Q^\omega$ satisfies the Rabin acceptance condition $\mathsf{Acc}$.
Following the method in PCTL* model checking, the problem can be reduced to computing the \emph{reachability} probability of certain \emph{accepting} terminal strongly connected components, which can be done by solving a system of linear equations. 
This process can be (efficiently) encoded in real arithmetic, details can be found in the Appendix.
We only sketch the key formulas here.

We write a formula $\mathit{Goal}_v$ that efficiently expresses that a state $v$ is part of an accepting terminal strongly connected component, as well as a formula $\mathit{Sol}$, which expresses the solution to the system of linear equations. We can then write out $\mathit{Eqn}_{\P_\beta (\Phi), s}$:
\begin{eqnarray*}
\mathit{Eqn}_{\P_\beta (\Phi), s}
    &\defeq& \exists (r_{v}^{w}, r^\mathrm{sol}_{v})_{v, w \in S \times Q}. \\
    && \big[ \mathit{Prod}_\beta \land \mathit{Sol}\land  r_{\P_\beta(\Phi)} \approx r^\mathrm{sol}_{s, q^\ast}\big],
\end{eqnarray*}
with $\mathit{Prod}_\beta$ expressing the dynamics of the product Markov chain.

\paragraph{Degree of observability terms}
Formulas of the form $\DDD_{\beta, i}(\Phi)$ are significantly more complicated than the previous terms. We will end up defining $\mathit{Eqn}_{\DDD_{\beta, i}(\Phi), s}$ as: 
\begin{align*}
\mathit{Eqn}_{\DDD_{\beta, i}(\Phi), s}& \defeq \exists r_{\P_\beta(\Phi)} \exists r_{(\P_\beta(\Phi))^{-1}}  \exists r_{\obs_i \Phi}. \\
& \big[\mathit{Eqn}_{\P_\beta(\Phi), s} \land \mathit{Eqn}_{\obs_i \Phi, s}\\
& \land r_{\P_\beta(\Phi)} \not\approx 0 \to [r_{(\P_\beta(\Phi))^{-1}} \times r_{\P_\beta(\Phi)} \approx 1 \\
& \land r_{\DDD_{\beta, i}(\Phi)} \approx r_{\obs_i \Phi} \times r_{(\P_\beta(\Phi))^{-1}}]\\
& \land r_{\P_\beta(\Phi)} \approx 0 \to r_{\DDD_{\beta, i}(\Phi)} \approx 1\big].    
\end{align*}

The idea is that we will compute two values: (i) the probability of a random path satisfying $\Phi$ (represented by the variable $r_{\P_\beta(\Phi)}$), and (ii) the probability of a random path satisfying $\Phi$ whilst \emph{not} being observationally equivalent (for agent $i$) to a $\lnot\Phi$-path (represented by the variable $r_{\obs_i \Phi}$). The degree of observability is then obtained by dividing (ii) by (i) (with some care to deal with the situation in which the denominator is 0). As we already know how to compute (i) from the inductive step for probabilistic terms shown before, we will only focus on computing (ii). Again, due to space constraints, the detailed construction of $\mathit{Eqn}_{\obs_i \Phi, s}$ can be found in the Appendix.
The idea is that we will, often internally in the real arithmetic formula, construct two series of automata, which we will then combine afterwards. 

First:
\begin{enumerate}[label=(A.\arabic*)]
    \item Compute a deterministic Streett automaton\footnote{A DSA over alphabet $A$ is a tuple $(Q, \delta, q^\ast, \mathsf{Acc})$, defined identically to a DRA. The difference now is that the automaton accepts $\mathbf{w} \in A^\omega$ iff for \emph{all} $(E, F) \in \mathsf{Acc}$, the automaton's run following $\mathbf{w}$ reaches all states in $E$ finitely often, \textbf{or} reaches some state in $F$ infinitely often. In other words, the acceptance condition is dual to a Rabin one.} recognizing $\Phi$ outside the real arithmetic formula.
    \item From that DSA, construct a DSA $A_\Phi$ over the alphabet $\ACT^\AG \times S$ outside the real arithmetic formula,
    such that $A_\Phi$ accepts precisely those $\mathbf{w}$ such that $s\mathbf{w} \in \paths_\GGG(s)$ and $s\mathbf{w} \models \Phi$. In other words, the DSA accepts the paths from $s$ that satisfy $\Phi$.
\end{enumerate}

\noindent
Second:
\begin{enumerate}[label=(B.\arabic*)]
    \item Compute an NBA recognizing $\lnot \Phi$ outside the real arithmetic formula.
    \item From that NBA, build an NBA $A^\mathsf{NBA}_{\obs_i \lnot \Phi}$ over alphabet $\ACT^\AG \times S$ inside the real arithmetic formula, such that $A^\mathsf{NBA}_{\obs_i \lnot \Phi}$ accepts precisely those $\mathbf{w}$ such that $s\mathbf{w} \in \paths_\MMM(s)$ and for which there exists $\pi \in \paths_\MMM(s)$ with $\pi \models \lnot \Phi$ and $\obs_i(s\mathbf{w}) = \obs_i(\pi)$. In other words, the NBA accepts the paths from $s$ that are (for agent $i$) observationally equivalent to paths satisfying $\lnot \Phi$.
    \item Determinize $A^\mathsf{NBA}_{\obs_i \lnot \Phi}$ into an equivalent DRA $A^\mathsf{DRA}_{\obs_i \lnot \Phi}$ using Safra's construction encoded into the real arithmetic formula.
    \item Negate $A^\mathsf{NBA}_{\obs_i \lnot \Phi}$ inside of the real arithmetic formula, obtaining a DSA $A^\mathsf{DSA}_{\obs_i \Phi}$ recognizing precisely those sequences $\mathbf{w} \in (\ACT^\AG \times S)^\omega$ such that if $s\mathbf{w} \in \paths_\MMM(s)$, then $s\mathbf{w}$ is not observationally equivalent (for $i$) to any path satisfying $\lnot \Phi$.
\end{enumerate}

We will then, finally, take the following steps:
\begin{enumerate}[label=(C.\arabic*)]
    \item Build the product DSA $A = A_\Phi \otimes A^\mathsf{DSA}_{\obs_i \Phi}$ inside of the real arithmetic formula, which recognizes precisely those $\mathbf{w}$ such that $s\mathbf{w} \in \paths_\MMM(s)$, $s\mathbf{w} \models \Phi$, and $s\mathbf{w}$ is not observationally equivalent to any path satisfying $\lnot \Phi$.
    \item Construct the product Markov chain $M_{\bm{\sigma}} \otimes A$ again inside the real arithmetic formula, and express the process of computing the probability of a random walk in it being accepting.
\end{enumerate}
\noindent
The idea is that we wish to end up with some deterministic automaton recognizing the set of $\Phi$-paths that are not observationally equivalent to a $\lnot\Phi$-path. This needs to be deterministic to be able to proceed with the product Markov chain construction. The A-construction builds a deterministic automaton that accepts precisely $\Phi$-paths, while the B-construction builds a deterministic automaton that accepts sequences that are not equivalent to $\lnot\Phi$-paths. By taking the product of both constructions, we end up with an automaton that recognizes the language we are after.

Note that many parts of the construction happen internally in the formula. This happens anytime a step relies on evaluating \opacpsl\ formulas, as such evaluation is only possible relative to a valuation, which we are quantifying over in real arithmetic, and therefore do not have access to. 

\subsection{Complexity}
\begin{theorem} \label{theo:complexity}
Model checking \opacpsl\ can be done in space triple exponential with respect to the \opacpsl\ sentence, and double exponential with respect to the partially observable multi-agent system.
\end{theorem}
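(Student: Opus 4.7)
My plan is to analyse the real arithmetic sentence $\form_{\varphi,s}$ produced by the algorithm of the previous subsection and to invoke the classical quantifier–elimination bounds of Basu–Pollack–Roy: a prenex sentence with alternating quantifier blocks of sizes $k_1,\ldots,k_n$ and a quantifier–free matrix containing at most $s$ polynomials of degree at most $d$ can be decided in space polynomial in $k_1\cdots k_n \log d$. It therefore suffices to bound by induction on $\varphi$ (i) the size and degree of the matrix of $\form_{\varphi,s}$, (ii) the number of quantifier alternations, and (iii) the cardinality of each quantifier block, and then substitute into the RCF bound. For the purely PSL fragment, the analysis reproduces the bounds of \cite{AminofKMMR19}: block sizes polynomial in $|\MMM|$ and singly exponential in $|\varphi|$, with alternations polynomial in $|\MMM|$ and doubly exponential in $|\varphi|$, giving space exponential in the system and triply exponential in the formula. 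The remainder of the argument concerns the two new operators.

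For $\opac_k\Phi$, the NBAs $A_\Phi$ and $A_{\lnot\Phi}$ have size singly exponential in $|\Phi|$, and the graph $\GGG_{\opac_k\Phi}$ used to detect infinite walks hitting $F_{\opac_k\Phi}$ infinitely often has vertex set of size polynomial in $|S|$ and singly exponential in $|\Phi|$. The non–emptiness check reduces to a constant number of reachability predicates, expressible by introducing one quantifier block of size proportional to the graph, with no new alternations beyond those of $\Phi$. Hence this operator does not worsen the PSL-style bounds. The genuinely new source of blowup is $\DDD_{\beta,i}(\Phi)$: the NBA $A^{\mathsf{NBA}}_{\obs_i\lnot\Phi}$ synchronises the system, the NBA for $\lnot\Phi$, and an observation–matching guess, giving size $N = |S|\cdot 2^{O(|\Phi|)}$ (polynomial in the system, singly exponential in the formula). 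Safra's determinization then produces a DRA of size $2^{O(N\log N)}$, which is singly exponential in the system and doubly exponential in the formula. Crucially, a Safra tree admits a compact binary encoding of $O(N\log N)$ bits, so representing one DRA state inside the RCF formula needs only $O(N\log N)$ real variables, i.e., still polynomial in $|S|$ and singly exponential in $|\Phi|$. Building the product Markov chain $M_{\bm\sigma}\otimes A$ internally, then encoding the standard accepting–SCC reachability and linear–equation characterisation of the acceptance probability, preserves these bounds up to polynomial factors.

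Aggregating, $\form_{\varphi,s}$ has quantifier blocks of size polynomial in $|\MMM|$ and singly exponential in $|\varphi|$, a number of alternations polynomial in $|\MMM|$ and doubly exponential in $|\varphi|$ (the dominant source of alternations being the nested $\exists x$/$\P_\beta$/$\DDD_{\beta,i}$ structure of $\varphi$), and polynomials whose degrees are bounded polynomially. Substituting into the Basu–Pollack–Roy bound, the product $k_1\cdots k_n$ is doubly exponential in $|\MMM|$ and triply exponential in $|\varphi|$, and taking a polynomial yields the claimed space bound: doubly exponential in the system, triply exponential in the sentence. The main obstacle I anticipate is making the Safra step of (iii) completely precise, namely checking that the determinisation, the product with $M_{\bm\sigma}$, and the linear–equation encoding used to compute $r_{\obs_i\Phi}$ can all be internalised in the RCF formula using a bit–level (rather than state–level) representation of Safra trees, so that the block size only tracks the \emph{bit–length} of the DRA state rather than the state count itself; any looser encoding would already push the system dependency to triply exponential and violate the theorem.
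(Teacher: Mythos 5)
Your overall strategy---bound the real arithmetic sentence produced by the translation and feed the bounds into the Basu--Pollack--Roy result---is the same as the paper's, and your accounting of the automata sizes (NBA singly exponential in $\Phi$, Safra determinization adding another exponential, the $\opac_k\Phi$ graph adding nothing beyond the \psl-style costs) matches the appendix. The gap is exactly at the step you flag as crucial. You claim that because a Safra tree has an $O(N\log N)$-bit description, ``representing one DRA state inside the RCF formula needs only $O(N\log N)$ real variables,'' so the quantifier blocks stay polynomial in $|\MMM|$. This conflates encoding \emph{which} state the automaton is in with what the quantified reals actually are: in the internalised product-chain construction they are the transition probabilities $r^{s_2,q_2,q_2'}_{s_1,q_1,q_1'}$ and the unknowns $r^{\mathrm{sol}}_v$ of the linear system characterising the acceptance probability, and that system has one real unknown \emph{per product state} (and one per pair of states). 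Since the DRA state space is the set of Safra trees---exponentially many in the NBA, hence singly exponentially many in $|\MMM|$ and doubly exponentially many in $|\Phi|$---first-order real arithmetic gives you no mechanism to compress an exponentially indexed family of independent real unknowns into polynomially many variables. The paper accepts this blowup: it spends at least one quantifier per Safra tree, so the quantifier count is singly exponential in the system and doubly exponential in the sentence, and then applies the fact that real arithmetic is decidable in space exponential in the number of quantifiers, which yields precisely double exponential in the system and triple exponential in the sentence.

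Two consequences follow. First, your final arithmetic is internally inconsistent: with block sizes and alternation counts both polynomial in $|\MMM|$, the product $k_1\cdots k_n$ is $2^{\mathrm{poly}(|\MMM|)}$, i.e.\ singly exponential in the system, not ``doubly exponential in $|\MMM|$'' as you conclude---your stated counts would prove a strictly stronger theorem than the one claimed, which should have been a warning sign. Second, your closing worry is backwards: the ``looser'' encoding with one quantifier per Safra tree does not push the system dependency to triply exponential; it gives singly exponentially many quantifiers, hence doubly exponential space, which is exactly the bound in the statement. To repair the proof, drop the compression claim, count the quantifiers as the paper does, and apply the exponential-in-quantifier-count space bound directly.
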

\begin{proof}[Proof sketch]
The size of the real arithmetic formula is double exponential and the number of quantifiers is easily verified to be single exponential w.r.t. the sentence size. While the dependence of our real arithmetic formula's size on the \opacpsl\ sentence is the same as that of \psl, we do get another exponential blowup with respect to the size of the system. This extra blowup is caused by the larger number of quantifiers in our real arithmetic formula: we require at least one quantifier per Safra tree appearing in the B-construction in order to express the dynamics of the product Markov chain.

Since the validity of real arithmetic is decidable in space exponential w.r.t. the number of quantifiers and logarithmic w.r.t. the size of the quantifier-free part of the formula \cite{Renegar92,Basu14}, we therefore arrive at an overall space complexity that is triple exponential w.r.t. the \opacpsl\ sentence, and double exponential w.r.t. the system.
\end{proof}

\section{Conclusions and Future Work}
\label{sec:conclusion}
This paper provides a framework for expressing and reasoning about observability within MAS, along with the capability to quantify the degree of observability under specified strategies. The framework contributes to formal analysis and verification in multi-agent systems, especially for properties relating to information security, privacy, and trustworthiness. In particular, \opacpsl\ enables a rigorous analysis of agent observability and information transparency, allowing the assessment of how much information about system behaviours is available to different agents. This is crucial for identifying potential vulnerabilities and understanding the security implications of information exposure. 

In considering future directions, there are several areas that would be interesting to explore. First, the interconnections and synergies between \opacpsl\ and other logics, such as epistemic logics, would augment the framework's expressive capabilities. Another possible line of work involves extending \opacpsl\ to include additional aspects of multi-agent systems, such as hierarchical structures or more complex forms of actions. Adapting \opacpsl\ to navigate dynamic and evolving environments, where agents' strategies may undergo temporal transformations, presents another  area for investigation, as does investigating the application of the framework in the domains of AI safety and responsibility. Finally, incorporating game-theoretic approaches may allow balancing between utility and security.

\bibliographystyle{splncs03}
\bibliography{BIB-info-mas}

\newpage

\appendix
\section*{Appendix}
\label{sec:appendix}

\section{The Model Checking Procedure}
Take a partially observable multiagent system $\MMM = ((\AG, S, \ACT, T), \ASP, L, \{\obs_i\}_{i \in \AG})$, and an oPSL sentence $\varphi$. Note that since we are considering the semantics of oPSL with memoryless strategies, we do not need to consider history formulas interpreted on histories, but instead on individual states. For this reason, we will throughout the description of the model checking procedure refer to history formulas as \emph{state formulas}.

We will define first-order logic (FOL) formulas $\form_{\varphi, s}$ by induction over state formulas $\varphi$ 
and states $s$, written in the signature of real arithmetic. The construction will be such that
for all oPSL sentences $\varphi$ and states $s$,
$\form_{\varphi, s}$ holds in the theory of real arithmetic if and only if $s \models \varphi$. Using the decidability of the theory of real arithmetic, we then get a model checking algorithm.

Throughout this text we will often write $\top_{\mathit{cond}}$ for some metalogical condition $\mathit{cond}$, defined as $\top_\mathit{cond} \defeq \top$ if $\mathit{cond}$ is true, and $\top_\mathit{cond} \defeq \bot$ otherwise. We will denote the equality symbol inside of the real arithmetic formula by $\approx$ to avoid any confusion.

Also throughout this construction, we will often in our explanations ignore the presence of the valuation in the semantics, and write e.g. $s \models \varphi$ for non-sentences $\varphi$. We assume the valuation to still be there in the background, but it usually does not add much to the intuitions and explanations of the constructions, so we leave it out for brevity. Similarly, we will often write $\pi \models \Phi$ instead of $\pi, 0 \models \Phi$ for path formulas.

We will only speak of the complexity of the construction after we have fully described it.

\paragraph{Atoms and Booleans}
Let $\form_{p, s} \defeq \top_{p \in L(s)}$. We put $\form_{\varphi \land \psi, s} \defeq \form_{\varphi, s} \land \form_{\psi, s}$ and $\form_{\lnot \varphi, s} \defeq \lnot \form_{\varphi, s}$.

\paragraph{Existential quantifier}
Given the formula $\exists x.\varphi$, we introduce variables $r_{x, \theta, a}$, intuitively encoding the probability that the strategy $x$ performs action $a$ upon observing $\theta \in \Theta^\mathsf{s}$. We let $$\form_{\exists x.\varphi, s} \defeq \exists (r_{x, \theta, a})_{\theta \in \Theta^\mathsf{s}, a \in \ACT} [\mathit{Dist}_x \land \form_{\varphi, s}],$$
where
$$
\mathit{Dist}_x \defeq [\bigwedge_{\theta \in \Theta^\mathsf{s}, a \in \ACT} r_{x, \theta, a} \geqslant 0] \land [\bigwedge_{\theta \in \Theta^\mathsf{s}}\sum_{a \in \ACT} r_{x, \theta, a} \approx 1].
$$
The formula $\mathit{Dist}_x$ encodes that the variables $r_{x,\theta,a}$ give probability distributions for each $\theta$.

\paragraph{Full observability formula}
For the formula $\odot_k \Phi$, we will write a formula that expresses that it is \textbf{not} the case that $s \nmodels \odot_k \Phi$. The reason for this double negation is that we can relatively easily express $s \nmodels \odot_k \Phi$: this holds iff there exist paths $\pi, \pi' \in \paths_\MMM(s)$ such that $\pi \models \Phi$ and $\pi' \models \lnot \Phi$, but $\obs_k(\pi) = \obs_k(\pi')$. To check this, we will describe a rooted directed graph $\GGG_{\odot_k \Phi}$ inside of the real arithmetic formula, such that infinite walks through the graph (starting from the root) correspond precisely to pairs $\pi, \pi' \in \paths_\MMM(s)$. Furthermore, we will identify a set $F_{\odot_k\Phi}$ of vertices in the graph, such that infinite walks $(\pi, \pi')$ through the graph reach $F_{\odot_k \Phi}$ \emph{infinitely often} if and only if $\pi \models \Phi$, $\pi' \models \lnot \Phi$, and $\obs_k(\pi) = \obs_k(\pi')$. Given this, all we then need to express that $s \nmodels \odot_k \Phi$ is that there exists an infinite walk through the graph that hits $F_{\odot_k \Phi}$ infinitely often.

Note that $\Phi$ can be considered to be an LTL formula over atomic propositions $W = 2^{\mathsf{Max}(\Phi)}$, where $\mathsf{Max}(\Phi)$ is the set of maximal state subformulas of $\Phi$. So we can construct nondeterministic Büchi automata\footnote{An NBA over alphabet $A$ is a tuple $(Q, D, q^\ast, F)$ where $Q$ is a finite set of states, $D \subseteq Q \times A \times Q$ is a transition relation, $q^\ast \in Q$ is the initial state, and $F \subseteq Q$ is the set of accepting states. The automaton accepts an infinite sequence $\mathbf{w} \in A^\omega$ iff there exists some run of the automaton following $\mathbf{w}$, such that the run reaches some state in $F$ infinitely often.} $A_\Phi = (Q_\Phi, D_\Phi, q_\Phi^\ast, F_\Phi)$ and $A_{\lnot\Phi} = (Q_{\lnot\Phi}, D_{\lnot\Phi}, q_{\lnot\Phi}^\ast, F_{\lnot\Phi})$ over alphabet $W$, recognising those $\mathbf{w} \in W^\omega$ such that $\mathbf{w} \models_\mathsf{LTL} \Phi$ and $\mathbf{w} \models_\mathsf{LTL} \lnot \Phi$ respectively, where we write $\models_\mathsf{LTL}$ for the satisfaction relation of LTL.

Now we introduce some notation. We write $\Sigma_\xi = \ACT^\AG \cup \{\xi\}$, where $\xi$ is some distinguished joint action not in $\ACT^\AG$. We will as a convention put $\obs^\mathsf{a}_k(\xi) \defeq \xi$ for all agents' observation functions $\obs_k$. And similarly, as a convention we always put $T(t, \xi)(t') = 0$ for all states $t, t' \in S$.

Using these automata, we can describe our directed graph. We first do it outside of the real arithmetic formula so its construction is clear. Its vertex set is
$$V_{\odot_k \Phi} = \Sigma_\xi \times S \times Q_\Phi \times \Sigma_\xi \times S \times Q_{\lnot \Phi} \times 2,$$
where $2 = \{0, 1\}$. The root of the graph is the vertex $(\xi, s, q_\Phi^\ast, \xi, s, q_{\lnot\Phi}^\ast, 1)$. We have an edge from $(\alpha_1, s_1, q_1, \alpha'_1, s'_1, q'_1, b_1)$ to $(\alpha_2, s_2, q_2, \alpha'_2, s'_2, q'_2, b_2)$ if and only if:
\begin{enumerate}
    \item $D_\Phi(q_1, \{\varphi \in \mathsf{Max}(\Phi) \mid s_1 \models \varphi\}, q_2)$
    \item $D_{\lnot \Phi}(q'_1, \{\varphi \in \mathsf{Max}(\Phi) \mid s'_1 \models \varphi\}, q'_2)$
    \item $T(s_1, \alpha_2)(s_2) > 0$
    \item $T(s'_1, \alpha'_2)(s'_2) > 0$
    \item If $\mathsf{obs}_k(\alpha_2, s_2) = \mathsf{obs}_k(\alpha'_2, s'_2)$, then $b_2 = b_1$
    \item If $\mathsf{obs}_k(\alpha_2, s_2) \neq \mathsf{obs}_k(\alpha'_2, s'_2)$, then $b_2 = 0$
\end{enumerate}

Intuitively, walks through the graph consist of two paths in $\MMM$. For the first path, we keep track of the corresponding state of $A_\Phi$; for the second, we keep track of $A_{\lnot\Phi}$. At the same time, we keep track of a bit $b \in 2$ that expresses whether the two paths have been observationally equivalent for agent $k$ so far.

We consider the set $F_{\odot_k\Phi}$ of distinguished vertices to be $F_{\odot_k \Phi} = \Sigma_\xi \times S \times F_\Phi \times \Sigma_\xi \times S \times F_{\lnot\Phi} \times \{1\}$. It is immediate from our construction and the defining property of $A_\Phi$ and $A_{\lnot\Phi}$ that there exist infinite walks through the graph from the root that hit this set infinitely often, if and only if $s \nmodels \odot_k \Phi$.

We note that since this graph has finitely many vertices, we have that there exist infinite walks with the property mentioned above, if and only if there exists some vertex $v \in F_{\odot_k \Phi}$ that is reachable from the root, and that has a non-empty path back to itself. It is precisely this that we will express in the real arithmetic formula.

For $\Psi \in \{\Phi, \lnot\Phi\}$, $s \in S$ and $q, q' \in Q_\Psi$, we write a formula $\mathit{Tr}_{\Psi, q, q'}^s$ expressing that $A_\Psi$ transitions from $q$ to $q'$ upon receiving the valuation of $s$.
$$
\mathit{Tr}_{\Psi, q, q'}^s \defeq \bigvee_{\substack{X \in W \\ D_\Psi(q, X, q')}}\big[\bigwedge_{\varphi \in X}\form_{\varphi, s} \land \bigwedge_{\varphi \in \mathsf{Max}(\Phi) \setminus X} \lnot \form_{\varphi, s}\big].
$$

Next, we define a formula $\mathit{Edge}_{\alpha_1, s_1, q_1, \alpha'_1, s'_1, q'_1, b_1}^{\alpha_2, s_2, q_2, \alpha'_2, s'_2, q'_2, b_2}$, stating that there is an edge in the directed graph $\mathcal{G}_{\odot_k\Phi}$ from the vertex in the subscript to the vertex in the superscript.

\begin{align*}
   \mathit{Edge}_{\alpha_1, s_1, q_1,\alpha'_1, s'_1, q'_1, b_1}^{\alpha_2,s_2, q_2, \alpha'_2, s'_2, q'_2, b_2} \defeq&  \mathit{Tr}_{\Phi, q_1, q_2}^{s_1} \land \mathit{Tr}_{\lnot \Phi, q'_1, q'_2}^{s'_1}\\
   &\land \top_{T(s_1, \alpha_2)(s_2) > 0}\\
   &\land \top_{T(s'_1, \alpha'_2)(s'_2) > 0}\\
    &\land \top_{\obs_k(\alpha_2, s_2) = \obs_k(\alpha'_2, s'_2) \implies b_1 = b_2}\\
    &\land \top_{\obs_k(\alpha_2, s_2) \neq \obs_k(\alpha'_2, s'_2) \implies b_2 = 0}
\end{align*}

Next, we need to express that there exist some vertex $v \in F_{\odot_k \Phi}$ that is reachable from the root, and that has a non-empty path back to itself. So we need to be able to express \emph{reachability}. To do this efficiently, we use the method well-known from second-order logic: a vertex $v$ reaches $w$ iff for all edge-closed sets $C$ of vertices, we have that $v \in C$ implies $w \in C$.\begin{nm}This next footnote should only be included after acceptance, as it does somewhat break anonymity.\footnote{We thank the authors of the original PSL paper for pointing out this efficient construction to us.}\end{nm}

To be able to quantify over sets of vertices, we introduce variables $r^\mathrm{mark}_v$ for each vertex $v$, with the intuition being that $r^\mathrm{mark}_v \neq 0$ denotes that $v$ is part of the subset $X$ of vertices. The following formula $\mathit{Cl}$ expresses that the set defined by these variables is closed under the edge relation.
\begin{align*}
\mathit{Cl} \defeq \bigwedge_{v, w \in V_{\odot_k \Phi}} [r^\mathrm{mark}_v \not\approx 0 \land \mathit{Edge}_v^w] \to r^\mathrm{mark}_w \not\approx 0
\end{align*}

Using this, we define reachability.
\begin{align*}
\mathit{Reach}_{\odot_k \Phi, v}^w \defeq& 
\forall (r^\mathrm{mark}_{u})_{u \in V_{\odot_k \Phi}}[\mathit{Cl} \land r^\mathrm{mark}_v \not\approx 0] \to r^\mathrm{mark}_w \not\approx 0
\end{align*}

We can now finally define $\form_{\odot_k\Phi, s}$ by stating that we cannot reach vertices in $F_{\odot_k \Phi}$ from the root that go back to themselves with non-empty paths.
\begin{align*}
\form_{\odot_k\Phi, s} \defeq & \; \lnot \bigvee_{v \in F_{\odot_k\Phi}} \big[ \mathit{Reach}_{\odot_k\Phi, (\xi, s, q^\ast_\Phi, \xi, s, q^\ast_{\lnot\Phi}, 1)}^{v}
\land \bigvee_{v' \in V_{\odot_k\Phi}} [\mathit{Edge}_{v}^{v'} \land \mathit{Reach}_{\odot_k\Phi, v'}^{v}]\big]
\end{align*}

\paragraph{Inequalities}
Given the formula $\tau_1 \leqslant \tau_2$, we introduce variables $r_{\tau}$ for all arithmetic subterms $\tau$ appearing in $\tau_1$ and $\tau_2$ (and we denote the set of all such subterms by $\mathit{Sub}(\tau_1, \tau_2)$), which will intuitively hold the values the terms $\tau$ will take at the state $s$ under consideration, and given the strategies assigned to the strategic variables $x$. We let
$$
\form_{\tau_1 \leqslant \tau_2, s} \defeq \exists (r_{\tau})_{\tau \in \mathit{Sub}(\tau_1, \tau_2)}[\mathit{Eqn}_{\tau_1, s} \land \mathit{Eqn}_{\tau_2, s} \land r_{\tau_1} \leqslant r_{\tau_2}],
$$
wherein the formula $\mathit{Eqn}_{\tau_i, s}$ encodes that the variable $r_{\tau_i}$ indeed holds the value of $\tau_i$ in $s$ under the current valuation. We will now define these formulas.

\paragraph{Arithmetic terms}
We let:
$$\mathit{Eqn}_{c, s} \defeq [r_c \approx c],$$ 
$$\mathit{Eqn}_{\tau^{-1}, s} \defeq \mathit{Eqn}_{\tau, s} \land [r_{\tau} \times r_{\tau^{-1}} \approx 1],$$ 
$$\mathit{Eqn}_{\tau_1 + \tau_2, s} \defeq \mathit{Eqn}_{\tau_1, s} \land \mathit{Eqn}_{\tau_2, s} \land [r_{\tau_1 + \tau_2} \approx r_{\tau_1} + r_{\tau_2}],$$
$$\mathit{Eqn}_{\tau_1 \times \tau_2, s} \defeq \mathit{Eqn}_{\tau_1, s} \land \mathit{Eqn}_{\tau_2, s} \land [r_{\tau_1 \times \tau_2} \approx r_{\tau_1} \times r_{\tau_2}],$$


\paragraph{Probabilistic terms}
Consider the term $\P_\beta (\Phi)$. The idea here is that we will `internalize' the description of the model checking procedure of PCTL* inside of our real arithmetic formula.

Considering $\Phi$ to be an LTL formula over $W = 2^{\mathsf{Max}(\Phi)}$ like in the construction  for the full observability formula, we construct a deterministic Rabin automaton\footnote{A DRA over alphabet $A$ is a tuple $(Q, \delta, q^\ast, \mathsf{Acc})$, where $Q$ is a finite set of states, $\func{\delta, Q \times A, Q}$ is the transition function, $q^\ast \in Q$ is the initial state, and $\mathsf{Acc} \subseteq 2^Q \times 2^Q$ is the \emph{Rabin acceptance condition}: a set of tuples $(E, F)$ of subsets $E, F \subseteq Q$. The automaton accepts an infinite sequence $\mathbf{w} \in A^\omega$ iff there exists some $(E, F) \in \mathsf{Acc}$ such that on the automaton's run following $\mathbf{w}$, it reaches all states in $E$ \emph{finitely} often, and reaches some state in $F$ \emph{infinitely} often.} $A_\Phi = (Q, q^\ast, \delta, \mathsf{Acc})$ over alphabet $W$ accepting those $\mathbf{w} \in W^\omega$ such that $\mathbf{w} \models_\mathsf{LTL} \Phi$.

Before giving the real arithmetic construction, we will again describe outside of it what we aim to do. Recall that a (memoryless) strategy profile $\bm{\sigma}$ (as e.g. given by a binding $\beta$ and valuation $\nu$) turns our model $\MMM$ into a Markov chain $M_{\bm{\sigma}} = (S, T_{\bm{\sigma}})$, with $\func{T_{\bm{\sigma}}, S, \dist(S)}$ defined by putting $$T_{\bm{\sigma}}(s_1)(s_2) = \sum_{\alpha \in \ACT^\AG} T(s_1, \alpha)(s_2) \prod_{i \in \AG} \sigma_i(\obs_i(s_1))(\alpha_i).$$
We wish to compute the probability that a random walk through this Markov chain starting at $s$ satisfies $\Phi$. To do so, we will employ the product Markov chain construction as is used in PCTL* model checking.\footnote{Note that the following construction can only be defined because our automaton is deterministic, which explains why we did not just construct an NBA.} The product Markov chain $M_{\bm{\sigma}} \otimes A$ will have the same dynamics as $M_{\bm{\sigma}}$, whilst also following the automaton $A$. More precisely, we are defining the Markov chain 
$M_{\bm{\sigma}} \otimes A = (S \times Q, T_{\bm{\sigma}, A}$), with $T_{\bm{\sigma}, A}$ being given as
$$
T_{\bm{\sigma}, A}(s_1, q_1)(s_2, q_2) =
\begin{cases}
T_{\bm{\sigma}}(s_1)(s_2) &\text{if}\ \delta(q_1, \{\varphi \in \mathsf{Max}(\Phi) \mid s_1 \models \varphi\}) = q_2\\
0 &\text{otherwise}
\end{cases}
$$
It follows from this construction and the defining property of the DRA $A$, that the probability we are after is precisely the probability that a random walk through $M_{\bm{\sigma}} \otimes A$ from $(s, q^\ast)$ is `accepted' by $A$, in the sense that the walk's sequence of automaton states $q_0 q_1 \cdots \in Q^\omega$ satisfies the Rabin acceptance condition $\mathsf{Acc}$.

Now, one of the fundamental properties of Markov chains that is used in PCTL* model checking states that for all states in a finite-state Markov chain, a random walk starting there will with probability 1 (i) eventually reach some terminal strongly connected component (tSCC)\footnote{A set $C$ of states of a Markov chain is called a strongly connected component, if it is a strongly connected component of the Markov chain's underlying directed graph (i.e. the one obtained by placing edges between states that have non-zero transition probability). Furthermore, $C$ is said to be terminal if it is terminal in the directed graph (where a strongly connected component of a directed graph is terminal if there exists no vertex outside the component that is reachable from within).}, (ii) stay in that tSCC forever, and (iii) visit each state in that tSCC infinitely often. 

Applying this property to $M_{\bm{\sigma}} \otimes A$, we see that the probability of a random walk from $(s, q^\ast)$ being accepted by $A$ is equal to the probability of a random walk eventually reaching some tSCC $C$, such that $C \cap (S \times E) = \emptyset$ and $C \cap (S \times F) \neq \emptyset$ for some $(E, F) \in \mathsf{Acc}$. We refer to such tSCCs as \emph{accepting tSCCs}.

So we have reduced the problem to computing a \emph{reachability} probability, which can be done by solving a system of linear equations. Using real-valued variables $p_{t, q}$ for $(t, q) \in S \times Q$, which intuitively will denote the probability of reaching an accepting tSCC from $(t,q)$, we solve the following system:
$$
\begin{cases}
p_{t, q} = 1 &\text{if $(t,q)$ lies in some accepting tSCC}\\
p_{t, q} = 0 &\text{if $(t,q)$ cannot reach any accepting tSCC}\\
p_{t,q} = \sum_{(t',q') \in S \times Q}T_{\bm{\sigma}, A}(t,q)(t', q') \times p_{t', q'}&\text{otherwise}
\end{cases}
$$
It can be verified (though it goes beyond our interests here) that this system always has a unique solution. Upon solving the system, the probability we are after is the value of $p_{s, q^\ast}$.

Having described generally what we intend to do, we will now describe how to encode this process (efficiently) in real arithmetic.

First, for $t \in S$ and $q, q' \in Q$, we write a formula $\mathit{Tr}_{q, q'}^t$ expressing that $A$ indeed transitions from $q$ to $q'$ upon receiving $s$.
$$
\mathit{Tr}_{q, q'}^t \defeq \bigvee_{\substack{X \in W \\ \delta(q, X) = q'}}\big[\bigwedge_{\varphi \in X}\form_{\varphi, t} \land \bigwedge_{\varphi \in \mathsf{Max}(\Phi) \setminus X} \lnot \form_{\varphi, t}\big].
$$

Introduce variables $r^{s_2, q_2}_{s_1, q_1}$ for $s_1, s_2 \in S$ and $q_1, q_2 \in Q$, which are meant to express the probability of transitioning from $(s_1, q_1)$ to $(s_2, q_2)$ in the product Markov chain we obtain from the current valuation (given by the variables $r_{x, \theta, a}$) and binding $\beta$. The formula $\mathit{Prod}_\beta$ expresses this:
\begin{align*}
\mathit{Prod}_\beta  \defeq \bigwedge_{\substack{s_1, q_1\\s_2, q_2}}& \big[\mathit{Tr}_{q_1, q_2}^{s_1} \to r^{s_2, q_2}_{s_1, q_1} \approx \sum\limits_{\alpha \in \mathit{Ac}^\mathit{Ag}}T(s_1, \alpha)(s_2)\prod\limits_{i \in \mathit{Ag}} r_{\beta(i), \obs_i(s_1), \alpha_i}\big]\\
&\land [\lnot \mathit{Tr}_{q_1, q_2}^{s_1} \to r^{s_2, q_2}_{s_1, q_1} \approx 0].
\end{align*}

We write for each $v, w \in S \times Q$ a formula $\mathit{Reach}_v^{w}$ expressing $v$ reaches $w$ in the underlying directed graph of the product chain. Again, like for the full observability formula, this makes use of variables $r^\mathrm{mark}_v$ and a formula $\mathit{Cl}$ expressing that we have a set of vertices closed under the edge relation.
\begin{align*}
\mathit{Cl} \defeq \bigwedge_{v, w \in S \times Q} [r^\mathrm{mark}_v \not\approx 0 \land r_v^w > 0] \to r^\mathrm{mark}_w \not\approx 0
\end{align*}

\begin{align*}
\mathit{Reach}_v^w \defeq \forall (r^\mathrm{mark}_{u})_{u \in S \times Q}[\mathit{Cl} \land r^\mathrm{mark}_v \not\approx 0] \to r^\mathrm{mark}_w \not\approx 0
\end{align*}

We now move on to encoding the process of solving the system of equations. 
We introduce variables $r^\mathrm{sol}_{t, q}$ for each $(t, q)$ which will contain the probability $p_{t,q}$ of reaching some accepting tSCC from $(t, q)$. The formula $\mathit{Goal}_v$ efficiently expresses that $v$ is part of an accepting tSCC, without having to explicitly consider sets of vertices at any point.
\begin{align*}
\mathit{Goal}_v \defeq [\bigwedge_{w \in S \times Q} \mathit{Reach}_v^w \to \mathit{Reach}_w^v]\land \bigvee_{(E,F) \in \mathsf{Acc}}[\bigvee_{w \in S \times F} \mathit{Reach}^{w}_{v}
\land \bigwedge_{w \in S \times E} \lnot\mathit{Reach}_v^w]
\end{align*}
To see that this encoding indeed works, note first that a vertex $v$ of any directed graph graph lies in a tSCC if and only if for every $w$ that $v$ reaches, we have that $w$ also reaches $v$. This should be easily verifiable. That explains the first part of the formula.

Then, we note second that the tSCC $v$ is a part of in the product Markov chain is accepting if and only if there is some $(E, F) \in \mathsf{Acc}$ such that $v$ does not reach any vertex $w \in S \times E$, and reaches some vertex $v \in S \times F$. To verify this, note that the tSCC $v$ is a part of has non-empty intersection with a set $X$ if and only if $v$ reaches some state in $X$. Left-to-right holds since the tSCC is an SCC: every pair of vertices in it can reach each other. Right-to-left holds since any vertex $w$ that $v$ reaches must also reach $v$, and so since the component is terminal, we must have that $w$ is a part of it. 

We can now write the formula $\mathit{Sol}$, expressing the solving of the system of equations.
\begin{align*}
\mathit{Sol} \defeq \bigwedge_{v \in S \times Q} &\mathit{Goal}_{v} \to r^\mathrm{sol}_{v} \approx 1\\
&\land [\lnot\bigvee_{w \in S \times Q} \mathit{Goal}_{w} \land \mathit{Reach}_{v}^{w}] \to r^\mathrm{sol}_{v} \approx 0\\
&\land [\lnot \mathit{Goal}_v \land \bigvee_{w \in S \times Q} \mathit{Goal}_{w} \land \mathit{Reach}_{v}^{w}] \to r^\mathrm{sol}_{v} \approx \sum_{w\in S \times Q}[r_{v}^{w} \times r^\mathrm{sol}_{w}]\\
\end{align*}
Putting all this together, we can finally write out $\mathit{Eqn}_{\P_\beta (\Phi), s}$:
\begin{align*}
\mathit{Eqn}_{\P_\beta (\Phi), s}
    \defeq \exists (r_{v}^{w}, r^\mathrm{sol}_{v})_{v, w \in S \times Q} \big[ \mathit{Prod}_\beta \land \mathit{Sol}\land  r_{\P_\beta(\Phi)} \approx r^\mathrm{sol}_{s, q^\ast}\big]
\end{align*}

\paragraph{Degree of observability terms}
Consider the term $\DDD_{\beta, i}(\Phi)$. This will be quite a bit more complicated than the previous terms.

We will end up defining $\mathit{Eqn}_{\DDD_{\beta, i}(\Phi), s}$ as 
\begin{align*}
\mathit{Eqn}_{\DDD_{\beta, i}(\Phi), s} \defeq& \exists r_{\P_\beta(\Phi)} \exists r_{(\P_\beta(\Phi))^{-1}}  \exists r_{\obs_i \Phi}.\big[\mathit{Eqn}_{\P_\beta(\Phi), s} \land \mathit{Eqn}_{\obs_i \Phi, s}\\
& \land r_{\P_\beta(\Phi)} \not\approx 0 \to [r_{(\P_\beta(\Phi))^{-1}} \times r_{\P_\beta(\Phi)} \approx 1 \\
& \land r_{\DDD_{\beta, i}(\Phi)} \approx r_{\obs_i \Phi} \times r_{(\P_\beta(\Phi))^{-1}}]\\
& \land r_{\P_\beta(\Phi)} \approx 0 \to r_{\DDD_{\beta, i}(\Phi)} \approx 1\big].    
\end{align*}
The idea is that we will compute two values: (i) the probability of a random path satisfying $\Phi$, and (ii) the probability of a random path satisfying $\Phi$ whilst \emph{not} being observationally equivalent (for agent $i$) to a $\lnot\Phi$-path. The degree of observability is then obtained by dividing (ii) by (i) (with some care to deal with the situation in which the denominator is 0). As we already know how to compute (i) from the inductive step for probabilistic terms shown before, we will only focus on computing (ii), which in the real arithmetic formula we will store in the variable $r_{\obs_i \Phi}$. So all we need to do is explain the construction of $\mathit{Eqn}_{\obs_i \Phi, s}$.

We will again begin by describing a procedure for computing (ii) in general, and then encode this in real arithmetic.

We will construct two series of automata, which we will then combine afterwards. First:
\begin{enumerate}[label=(A.\arabic*)]
    \item Compute a deterministic Streett automaton\footnote{A DSA over alphabet $A$ is a tuple $(Q, \delta, q^\ast, \mathsf{Acc})$, defined identically to a DRA. The difference now is that the automaton accepts $\mathbf{w} \in A^\omega$ iff for \emph{all} $(E, F) \in \mathsf{Acc}$, the automaton's run following $\mathbf{w}$ reaches all states in $E$ finitely often, \textbf{or} reaches some state in $F$ infinitely often. In other words, the acceptance condition is dual to a Rabin one.} recognising $\Phi$ outside of the real arithmetic formula.
    \item From that DSA, build a DSA $A_\Phi$ over the alphabet $\ACT^\AG \times S$ \emph{inside} the real arithmetic formula,
    such that $A_\Phi$ accepts precisely those $\mathbf{w}$ such that $s\mathbf{w} \in \paths_\MMM(s)$ and $s\mathbf{w} \models \Phi$. In other words, the DSA accepts the paths from $s$ that satisfy $\Phi$.
\end{enumerate}

Second:
\begin{enumerate}[label=(B.\arabic*)]
    \item Compute an NBA recognising $\lnot \Phi$ outside the real arithmetic formula.
    \item From that NBA, build an NBA $A^\mathsf{NBA}_{\obs_i \lnot \Phi}$ over alphabet $\ACT^\AG \times S$ inside the real arithmetic formula, such that $A^\mathsf{NBA}_{\obs_i \lnot \Phi}$ accepts precisely those $\mathbf{w}$ such that $s\mathbf{w} \in \paths_\MMM(s)$ and for which there exists $\pi \in \paths_\MMM(s)$ with $\pi \models \lnot \Phi$ and $\obs_i(s\mathbf{w}) = \obs_i(\pi)$. In other words, the NBA accepts the paths from $s$ that are (for agent $i$) observationally equivalent to paths satisfying $\lnot \Phi$.
    \item Determinize $A^\mathsf{NBA}_{\obs_i \lnot \Phi}$ into an equivalent DRA $A^\mathsf{DRA}_{\obs_i \lnot \Phi}$ using Safra's construction encoded into the real arithmetic formula.
    \item Negate $A^\mathsf{NBA}_{\obs_i \lnot \Phi}$ inside of the real arithmetic formula, obtaining a DSA $A^\mathsf{DSA}_{\obs_i \Phi}$ recognising precisely those sequences $\mathbf{w} \in (\ACT^\AG \times S)^\omega$ such that if $s\mathbf{w} \in \paths_\MMM(s)$, then $s\mathbf{w}$ is not observationally equivalent (for $i$) to any path satisfying $\lnot \Phi$.
\end{enumerate}

We will then, finally, take the following steps:
\begin{enumerate}[label=(C.\arabic*)]
    \item Build the product DSA $A = A_\Phi \otimes A^\mathsf{DSA}_{\obs_i \Phi}$ inside of the real arithmetic formula, which recognizes precisely those $\mathbf{w}$ such that $s\mathbf{w} \in \paths_\MMM(s)$, $s\mathbf{w} \models \Phi$, and $s\mathbf{w}$ is not observationally equivalent to any path satisfying $\lnot \Phi$.
    \item Construct the product Markov chain $M_{\bm{\sigma}} \otimes A$ again inside the real arithmetic formula, and compute the probability of a random walk in it being accepting.
\end{enumerate}

The idea is that we wish to end up with some deterministic automaton recognising the set of $\Phi$-paths that are not obs. equiv. to a $\lnot\Phi$-path. This needs to be deterministic to be able to proceed with the product Markov chain construction. The A-construction builds a deterministic automaton that accepts precisely $\Phi$-path, while the $B$-construction builds a deterministic automaton that accepts sequences that are not equivalent to $\lnot\Phi$-paths. By taking the product of both constructions, we end up with an automaton that recognizes the language we are after.

The A-construction is straightforward, but describing the B-construction requires more steps. In order to accept paths that are not observationally equivalent to $\lnot\Phi$-paths, we make use of a nondeterministic automata that accepts paths that \emph{are} observationally equivalent to one: intuitively, such an automaton is simply able to guess the other path. So instead of constructing a deterministic automaton from $\lnot\Phi$, we construct a nondeterministic Buchi automaton. Then, using Safra's construction, we can translate this into a deterministic Rabin automaton. We can very simply construct the complement of a DRA as a DSA. This is also why we do not contruct Rabin automata in the A-construction, but instead go straight for a Streett automaton.

Let us begin by describing the A-construction (note we are not yet presenting how the construction is done inside the real arithmetic formula), as it is the simplest. From $\Phi$, we construct a DSA $A_\Phi^0 = (Q, \delta, q^\ast, \mathsf{Acc})$ over $W = 2^{\mathsf{Max}(\Phi)}$ that recognizes $\Phi$. From this DSA, we construct the DSA $A_\Phi = (Q_\Phi, \delta_\Phi, (s, q^\ast), \mathsf{Acc}_\Phi)$ as follows. The state space is $Q_\Phi \defeq (S \times Q) \cup \{v_\mathsf{err}\}$, i.e. $S \times Q$ with a separate error state $v_\mathsf{err}$. We define $\delta_\Phi$ as
$$
\delta_\Phi((t, q), (\alpha, t')) \defeq
\begin{cases}
(t', \delta(q, \{\varphi \in \mathsf{Max}(\Phi) \mid t \models \varphi\})) &\text{if}\ T(t, \alpha)(t') > 0\\
v_\mathsf{err} &\text{otherwise},
\end{cases}
$$
and $\delta_\Phi(v_\mathsf{err}, (\alpha, t)) \defeq v_\mathsf{err}$. The Streett acceptance condition is $\mathsf{Acc}_\Phi \defeq \{(S \times E, S \times F) \mid (E, F) \in \mathsf{Acc}\} \cup \{(\{v_\mathsf{err}\}, S \times Q)\}$. It is not difficult to see that this construction is correct: the acceptance condition forces only those runs to be accepted which never reach the error state (so sequences that do not correspond to paths will never be accepted.)

Now we describe the B-construction's process (not yet in the real arithmetic formula). Let $A_{\lnot\Phi} = (Q_{\lnot\Phi}, D_{\lnot\Phi}, q^\ast_{\lnot\Phi}, F_{\lnot\Phi})$ be an NBA over $2^{\mathsf{Max}(\lnot\Phi)}$ that recognizes $\lnot\Phi$. We construct the NBA $\A^\mathsf{NBA}_{\obs_i\lnot\Phi} = (S \times S \times Q_{\lnot\Phi}, D^\mathsf{NBA}_{\obs_i\lnot\Phi}, (s, s, q^\ast_{\lnot\Phi}), F^\mathsf{NBA}_{\obs_i\lnot\Phi})$ over $\ACT^\AG \times S$ as follows.

The state space is $Q^\mathsf{NBA}_{\obs_i \lnot \Phi} = S \times S \times Q_{\lnot\Phi}$: states consist of two states of $\MMM$ and one state of the original NBA. Intuitively, the first $\MMM$-state will keep track of the path we are given, while the second $\MMM$-state will be that of the observationally equivalent path we are guessing.

The transition relation $D^\mathsf{NBA}_{\obs_i\lnot\Phi}$ is defined by having a transition from $(s_1, s'_1, q_1)$ to $(s_2, s'_2, q_2)$ under symbol $(\alpha, t)$ if and only if:
\begin{enumerate}
    \item $t = s_2$
    \item $T(s_1, \alpha)(s_2) > 0$
    \item There is $\alpha' \in \ACT^\AG$ such that $T(s'_1, \alpha')(s'_2) > 0$, and $\obs_i(\alpha, s_2) = \obs_i(\alpha', s'_2)$.
    \item $D_{\lnot\Phi}(q_1, \{\varphi \in \mathsf{Max}(\Phi) \mid s'_1 \models \varphi\}, q_2)$
\end{enumerate}
So transitions follow the symbol on the first $\MMM$-state, and guess some observationally equivalent transition from the second $\MMM$-state, while also transitioning the original NBA for the second state.

The set $F^\mathsf{NBA}_{\obs_i\lnot\Phi}$ of accepting states is defined as $F^\mathsf{NBA}_{\obs_i\lnot\Phi} \defeq S \times S \times F_{\lnot\Phi}$. It is not difficult to see that the resulting NBA is what we are after: it accepts precisely paths that are observationally equivalent to $\lnot\Phi$-paths.

Next, we apply Safra's well-known determinization construction\footnote{The precise construction we are using is the one as presented in: Roggenbach, M. (2002). Determinization of Büchi-Automata. In: Grädel, E., Thomas, W., Wilke, T. (eds) Automata Logics, and Infinite Games. Lecture Notes in Computer Science, vol 2500. Springer, Berlin, Heidelberg. \url{https://doi.org/10.1007/3-540-36387-4_3}} to construct a DRA $A^\mathsf{DRA}_{\obs_i\lnot\Phi}$ that is equivalent to our NBA. Since we will later need to encode this in our real arithmetic formula, we work out the construction here.

The construction is based on the notion of a \emph{Safra tree}, which we will now present relative to our NBA $A^\mathsf{NBA}_{\obs_i\lnot\Phi}$. A Safra tree  is a finite ordered tree $\mathcal{T}$, with its nodes coming from the vocabulary $V = \{1, 2, \dots, 2(\lvert S \rvert^2 \times \lvert Q_{\lnot\Phi} \rvert)\}$ (i.e. there is twice as many symbols as the amount of states in our NBA). By ordered tree, we mean a tree in which the children of every node carry some linear order $<$. We will often speak of children of a node being earlier/later than another to mean that the first, respectively, precedes ($<$) or succeeds $(>)$ the other in this linear order. Similarly, the last child of a node is its child that succeeds all other children of that node.

Nodes $v$ of a Safra tree $\mathcal{T}$ are labelled with a \emph{macrostate} $K_v^\mathcal{T}$, which is a non-empty subset $K_v^\mathcal{T} \subseteq S \times S \times Q_{\lnot\Phi}$. Nodes can also be marked with an additional symbol `$!$'.

Safra trees are required to satisfy two conditions, which also as a consequence mean that the number of Safra trees is exponential in the number of states of our NBA. For a Safra tree $\mathcal{T}$ we require that for all nodes $v$:
\begin{enumerate}
    \item There is at least one $(t, t',q) \in K^\mathcal{T}_v$ such that $v$ has no child $w$ with $(t, t', q) \in K_w^\mathcal{T}$
    \item For all distinct children $w \neq u$ of $v$, it holds that $K_w^\mathcal{T} \cap K_u^\mathcal{T} = \emptyset$.
\end{enumerate}

The DRA $A^\mathsf{DRA}_{\obs_i \lnot \Phi} = (Q^\mathsf{DRA}, \delta^\mathsf{DRA}, q_\ast^\mathsf{DRA}, \mathsf{Acc}^\mathsf{DRA})$ is defined as follows. Its state space $Q^\mathsf{DRA}$ is the set of Safra trees. The initial state $q_\ast^\mathsf{DRA}$ is the Safra tree with the single node $1$, labelled with macrostate $\{(s, s, q^\ast)\}$ (i.e. the singleton containing the initial state of our NBA). 

The transition function is defined as follows. Given a Safra tree $\mathcal{T}$ with nodes $N \subseteq V$, and input symbol $(\alpha, t) \in \ACT^\AG \times S$, the transition $\delta^\mathsf{DRA}(\mathcal{T}, (\alpha, t))$ produces the Safra tree computed as follows:
\begin{enumerate}
    \item The mark `$!$' is removed from all nodes in $\mathcal{T}$ that contain it
    \item For every node $v \in N$ with macrostate $K$ such that $K \cap F^\mathsf{NBA}_{\obs_i\lnot\Phi} \neq \emptyset$, a new node $w \in V \setminus N$ is added, and is made the new last child of $v$. The macrostate of $w$ is set to be $K \cap F^\mathsf{NBA}_{\obs_i\lnot\Phi}$.
    \item For every node $v$, its macrostate $K$ is replaced by the new macrostate $$\{(t_2, t'_2, q_2) \in S \times S \times Q_{\lnot\Phi} \mid D^\mathsf{NBA}_{\obs_i\lnot\Phi}((t_1, t'_1, q_1), (\alpha, t), (t_2, t'_2, q_2))\ \text{for some}\ (t_1, t'_1, q_1) \in K\}$$ 
    \item For every node $v$ with macrostate $K$, and all states $(t, t', q) \in K$ such that $v$ has an earlier sibling whose macrostate includes $(t, t', q)$, we remove $(t, t', q)$ from the macrostate of $v$.
    \item We remove all nodes with empty macrostates.
    \item For every node $v$ such that its macrostate is equal to the union of the macrostates of its children, we remove all nodes descended from $v$, and place the mark `$!$' on $v$.
\end{enumerate}

The Rabin acceptance condition $\mathsf{Acc}^\mathsf{DRA}$ is defined to be $\mathsf{Acc}^\mathsf{DRA} \defeq \{(E_v, F_v) \mid v \in V\}$, with $E_v$ being the set of those Safra trees that do not contain a node $v$, and $F_v$ being the set of all Safra trees that do contain a node $v$ which is additionally marked with `$!$'.

This finishes the description of the DRA. The DSA recognising its complement is obtained by just letting $A^\mathsf{DSA}_{\obs_i \Phi} = (Q_{\obs_i \Phi}, \delta_{\obs_i \Phi}, q^\ast_{\obs_i \Phi}, \mathsf{Acc}_{\obs_i \Phi})$ be defined identically to $A^\mathsf{DRA}_{\obs_i \lnot \Phi}$, but with the \emph{Streett} acceptance condition $\mathsf{Acc}_{\obs_i \Phi} \defeq \{(Q^\mathsf{DRA} \setminus E, Q^\mathsf{DRA} \setminus F) \mid (E, F) \in \mathsf{Acc}^\mathsf{DRA}\}$, i.e. the we take complements of all the sets in the condition.

Now, having completed the general descriptions of the A- and B-constructions, we can finally describe the C-construction. We will only describe the first step, since the second is virtually identical to our procedure for checking probabilistic terms presented before (with the exception that we now deal with a Streett condition instead of Rabin). The product DSA $A$ is defined by taking the products of the DSAs $A_\Phi$ and $A^\mathsf{DSA}_{\obs_i \Phi}$. The state space, transition function, and initial state are defined in the natural way. The acceptance condition $\mathsf{Acc}_\mathsf{prod}$ consists of all pairs $(E \times Q_{\obs_i \Phi}, F \times Q_{\obs_i \Phi})$ for $(E, F) \in \mathsf{Acc}_\Phi$, and all pairs $(Q_\Phi \times E, Q_\Phi \times F)$ for $(E, F) \in \mathsf{Acc}_{\obs_i\Phi}$.

We can now finally present the encoding of all the A, B and C-constructions in real arithmetic. We start with the A-construction. We begin by specifying the transition function of $A_\Phi$. The formula $\mathit{Tr}_{A_\Phi, x, y}^{\alpha,t}$ expresses that $x \in Q_\Phi$ transitions to $y \in Q_\Phi$ under symbol $(\alpha, t')$.
\begin{align*}
    \mathit{Tr}_{A_\Phi, x, y}^{\alpha, t} \defeq 
    \begin{cases}
    \top_{T(t, \alpha)(t') > 0} \land \bigvee\limits_{\substack{X \in W \\ \delta(q, X) = q'}} \big[\bigwedge_{\varphi \in X}\form_{\varphi, t} \land \bigwedge_{\varphi \in \mathsf{Max}(\Phi) \setminus X}\lnot \form_{\varphi, t} \big] &\text{if}\ x = (t, q), y = (t', q')\\
    \top_{T(t, \alpha)(t') > 0} &\text{if}\ x = (t, q), y = v_\mathsf{err}\\
    \top_{y = v_\mathsf{err}}&\text{otherwise}
    \end{cases}
\end{align*}
Now, the B-construction. We specify the transition relation of $A^\mathsf{NBA}_{\obs_i \lnot \Phi}$. The formula $\mathit{Tr}_{A^\mathsf{NBA}_{\obs_i \lnot \Phi}, (s_1, s'_1, q_1), (s_2, s'_2, q_2)}^{\alpha,t}$ expresses that $(s_1, s'_1, q_1)$ can transition to $(s_2, s'_2, q_2)$ under symbol $(\alpha, t)$.
\begin{align*}
    \mathit{Tr}_{A^\mathsf{NBA}_{\obs_i \lnot \Phi}, (s_1, s'_1, q_1), (s_2, s'_2, q_2)}^{\alpha,t} \defeq& \top_{t = s_2}\\
    & \land \top_{T(s_1, \alpha)(s_2) > 0}\\
    & \land \top_{\exists \alpha' \in \ACT^\AG .\; T(s'_1, \alpha')(s'_2) > 0\; \& \; \obs_i(\alpha, s_2) = \obs_i(\alpha', s'_2)}\\
    & \land \bigvee_{\substack{X \in W\\ D_{\lnot\Phi}(q_1, X, q_2)}} \big[\bigwedge_{\varphi \in X}\form_{\varphi, s'_1} \land \bigwedge_{\varphi \in \mathsf{Max}(\lnot\Phi) \setminus X}\lnot\form_{\varphi, s'_1}\big].
\end{align*}
Now, we encode the transition function of $A^\mathsf{DRA}_{\obs_i\lnot\Phi}$. Note that the notion of Safra tree can be defined completely outside of our real arithmetic formula. Similarly, we can perform most steps in the computation of the transition function outside our formula as well. Specifically, the only step which has to happen in the formula is step 3.

Let $\mathit{Saf}_{1\dots2,\alpha,t}^{\mathcal{T},\mathcal{T}'}$ be a formula that is defined as $\top$ if, on input symbol $(\alpha, t)$, from the Safra tree $\mathcal{T}$ we obtain the ordered tree $\mathcal{T}'$ (note that after these two steps the intermediary result need not satisfy the requirements of a Safra tree) by following steps 1 and 2 of the Safra tree computation we specified before. If not, the formula is defined as $\bot$. Similarly, let $\mathit{Saf}_{4\dots6,\alpha, t}^{\mathcal{T},\mathcal{T}'}$ be defined as $\top$ if under input symbol $(\alpha, t)$, from the ordered tree $\mathcal{T}$ we obtain the Safra tree $\mathcal{T}'$ by following steps 4 to 6 of the Safra tree computation we specified before. If not, the formula is defined as $\bot$.

Note that we need only consider ordered trees that are intermediary results in the Safra tree computation; these are ordered trees with at most as many nodes as the number of states of the original NBA, labelled with macrostates. Therefore the amount of such intermediary trees is again exponential wrt the number of states of the original NBA, and we can effectively enumerate them outside and inside of our formula. 

We will now define a formula $\mathit{Saf}_{3,\alpha,t}^{\mathcal{T},\mathcal{T}'}$, which states that on input symbol $(\alpha, t)$ from the ordered tree $\mathcal{T}$ we obtain the ordered tree $\mathcal{T}'$ by following step 3 of the Safra tree computation.
\begin{align*}
\mathit{Saf}_{3,\alpha,t}^{\mathcal{T},\mathcal{T}'} \defeq& \top_{\text{nodes of $\mathcal{T}$ and $\mathcal{T}'$ are the same}} \land \\
    & \bigwedge_{\substack{v \in \mathcal{T}'\\ (t_2, t'_2, q_2) \in S \times S \times Q_{\lnot\Phi}}} \big[\top_{(t_2, t'_2, q_2) \in K^{\mathcal{T}'}_{v}} \leftrightarrow \bigvee_{(t_1, t'_1, q_1) \in K^{\mathcal{T}}_v} \mathit{Tr}^{\alpha, t}_{A^\mathsf{NBA}_{\obs_i\lnot\Phi}, (t_1, t'_1, q_1), (t_2, t'_2, q_2)}\big].
\end{align*}

Using this, we can now finally specify the transition function of $A^\mathsf{DRA}_{\obs_i\lnot\Phi}$. The formula $\mathit{Tr}_{A^\mathsf{DRA}_{\obs_i\lnot\Phi}, \mathcal{T}, \mathcal{T}'}^{\alpha, t}$ expresses that $\delta^\mathsf{DRA}$ sends the Safra tree $\mathcal{T}$ to $\mathcal{T}$ on input symbol $(\alpha, t)$.
\begin{align*}
    \mathit{Tr}_{A^\mathsf{DRA}_{\obs_i\lnot\Phi}, \mathcal{T}, \mathcal{T}'}^{\alpha, t} \defeq \bigvee_{\text{intermediary ordered trees}\ \mathcal{T}_2, \mathcal{T}_3} \mathit{Saf}^{\mathcal{T}, \mathcal{T}_2}_{1\dots2, \alpha, t} \land \mathit{Saf}^{\mathcal{T}_2, \mathcal{T}_3}_{3, \alpha, t} \land \mathit{Saf}^{\mathcal{T}_3, \mathcal{T}'}_{4\dots6, \alpha, t}.
\end{align*}

This finishes the encodings of the transition functions in both the A- and B-constructions. We will now proceed to specify the computation of the probability in the product Markov chain, as we did in the procedure for probabilistic terms. We introduce variables $r_{s_1, q_1, q'_1}^{s_2,q_2, q'_2}$ for $s_1, s_2 \in S$, $q_1,q_2 \in Q_\Phi$, and $q'_1, q'_2 \in Q_{\obs_i\Phi}$. These will express the probability of transitioning from $(s_1, q_1, q'_1)$ to $(s_2, q_2, q'_2)$ in the product Markov chain. The formula $\mathit{Prod}_\beta$ again expresses this, with auxiliary binary variables $r^\mathit{Tr, A_\Phi}_{q_1, (\alpha, t), q_2}$ and $r^\mathit{Tr, A_{\obs_i\Phi}}_{q'_1, (\alpha, t), q'_2}$ which will numerically hold the truth value of the corresponding transition-formulas.

\begin{align*}
\mathit{Prod}_\beta  \defeq & \
\exists \big(r^\mathit{Tr, A_\Phi}_{q_1, (\alpha, t), q_2}\big)_{\substack{q_1, q_2 \in Q_\Phi \\ \alpha, t}} \exists \big(r^\mathit{Tr, A_{\obs_i\Phi}}_{q'_1, (\alpha, t), q'_2}\big)_{\substack{q'_1, q'_2 \in Q_{\obs_i\Phi} \\ \alpha, t}}.\\
&
\bigwedge_{\substack{q_1, q_2 \in Q_\Phi}} \bigwedge_{\alpha, t} [\mathit{Tr}^{\alpha, t}_{A_\Phi, q_1, q_2} \to  r^\mathit{Tr, A_\Phi}_{q_1, (\alpha, t), q_2} \approx 1] \land 
[\lnot \mathit{Tr}^{\alpha, t}_{A_\Phi, q_1, q_2} \to  r^\mathit{Tr, A_\Phi}_{q_1, (\alpha, t), q_2} \approx 0]\\
& \land \bigwedge_{\substack{q'_1, q'_2 \in Q_{\obs_i\Phi}}} \bigwedge_{\alpha, t} [\mathit{Tr}^{\alpha, t}_{A^\mathsf{DRA}_{\obs_i\lnot\Phi}, q'_1, q'_2} \to  r^\mathit{Tr, A_{\obs_i\Phi}}_{q'_1, (\alpha, t), q'_2} \approx 1] \\
&
\qquad \qquad \qquad \qquad \land 
[\lnot \mathit{Tr}^{\alpha, t}_{A^\mathsf{DRA}_{\obs_i\lnot\Phi}, q'_1, q'_2}\to  r^\mathit{Tr, A_{\obs_i\Phi}}_{q'_1, (\alpha, t), q'_2} \approx 0]\\
& \land \bigwedge_{\substack{s_1, s_2 \in S\\q_1, q_2 \in Q_\Phi\\q'_1, q'_2 \in Q_{\obs_i\Phi}}} r_{s_1, q_1, q'_1}^{s_2,q_2, q'_2} \approx \sum_{\alpha \in \ACT^\AG} r^\mathit{Tr, A_\Phi}_{q_1, (\alpha, s_2), q_2} \times r^\mathit{Tr, A_{\obs_i\Phi}}_{q'_1, (\alpha, s_2), q'_2} \times \\
& 
\qquad \qquad \qquad \qquad \qquad \qquad \qquad
T(s_1, \alpha)(s_2) \times \prod_{k \in \AG}r_{\beta(k), \obs_k(s_1), \alpha_k}.
\end{align*}

The reachability formula $\mathit{Reach}_v^w$ for states $v$ and $w$ of the product Markov chain is defined identically to how we did it in the construction for probabilistic terms, so we do not write it again. 

The formula $\mathit{Goal}_v$ needs to be changed to reflect that we now have a Streett acceptance condition, instead of a Rabin one.
\begin{align*}
\mathit{Goal}_v \defeq & [\bigwedge_{w \in S \times Q_\Phi \times Q_{\obs_i\Phi}} \mathit{Reach}_v^w \to \mathit{Reach}_w^v] \ \land \\
&
\quad \bigwedge_{(E,F) \in \mathsf{Acc}_\mathsf{prod}}[\bigvee_{w \in S \times F} \mathit{Reach}^{w}_{v}
\lor \bigwedge_{w \in S \times E} \lnot\mathit{Reach}_v^w]
\end{align*}

The formula $\mathit{Sol}$ is constructed identically to how we did it for probabilistic terms. So we can now conclude the model checking procedure by specifying $\mathit{Eqn}_{\obs_i\Phi, s}$:
$$
\mathit{Eqn}_{\obs_i\Phi, s} \defeq \exists (r_v^w, r^\mathsf{sol}_{v})_{v,w \in S \times Q_\Phi \times Q_{\obs_i\Phi}} [\mathit{\mathit{Prod}_\beta} \land \mathit{Sol} \land r_{\obs_i\Phi} \approx r^\mathsf{sol}_{s, q^\ast_\Phi, q^\ast_{\obs_i\Phi}}],
$$
where $q^\ast_\Phi$ and $q^\ast_{\obs_i\Phi}$ are the initial states of, respectively, $A_\Phi$ and $A^\mathsf{DSA}_{\obs_i\Phi}$.

\section{Complexity}
Sentences of real arithmetic can be verified in exponential space w.r.t. the number of quantifiers in the sentence, and logarithmic\footnote{Technically, it requires powers of logarithms, i.e. $(\log n)^{O(1)}$, but this matters not for our analysis.} space w.r.t. the size of the quantifier-free part of the sentence - see Theorem 14.14 and Remark 13.10 of ``Algorithms in Real Algebraic Geometry'' (2006) by Basu, Pollack and Roy.

It should be clear from the construction of our sentence that the size of the construction $\mathit{Eqn}_{\DDD_{\beta, i}(\Phi), s}$ dominates the other parts of the construction. We claim that both the A- and B-constructions produce DSAs of size double exponential w.r.t. $\Phi$.

For the A-construction, this occurs since the DSA constructed from an LTL formula will generally be of size double exponential in the LTL formula's size \begin{nm}TODO: Add reference that says that LTL to DSA gives a double exponential blowup.\end{nm}.
For the B-construction, we have that the NBA we construct is of size exponential in $\Phi$. \begin{nm}TODO: Add reference that says that LTL to NBA gives a single exponential blowup.\end{nm} Applying Safra's construction, we then incur another exponential blowup, landing us with a DRA (and therefore also DSA) double exponential in size w.r.t. $\Phi$. The final product DSA and Markov chain are thus also of size double exponential wrt $\Phi$.

It is easily verified that no part of the real arithmetic formulas requires another exponential blowup on top of this number, so we land with a real arithmetic sentence that is double exponential in size wrt $\Phi$.
The number of quantifiers in the sentence can be verified to also be double exponential w.r.t. the size of $\Phi$, as there are e.g. quantifiers for each Safra tree.

The dependence of these factors on the system $\MMM$ is different. The sentence is of size exponential wrt the system, as it has e.g. conjuncts for each $\alpha \in \ACT^\AG$, as well as disjuncts for each (intermediary) Safra tree, the number of which is exponential wrt the system (but double exponential wrt the formula). The number of quantifiers is similarly exponential wrt the system, as there is at least one quantifier for each Safra tree.

Summarising, we have that the size of the real arithmetic sentence is double exponential wrt the formula, and single exponential wrt the system. The number of quantifiers in it is also double exponential wrt the formula, and single exponential wrt the system. By the complexity result for checking real arithmetic we mentioned in the beginning, we can thus conclude that our overall model checking procedure runs in space triple exponential wrt the formula, and double exponential wrt the system. Thus we incur a single exponential blowup in the complexity of the procedure wrt the system size, when comparing to the procedure for PSL.

\end{document}